\newcommand{\ve}[2]{\langle #1 ,  #2 \rangle}
\newcommand{\RR}{\mathbf{R}}
\newcommand{\eqdef}{\stackrel{\text{def}}{=}}
\newcommand{\BL}{\begin{lem}} 
\newcommand{\EL}{\end{lem}}   
\newcommand{\beq}{\begin{equation}}
\newcommand{\eeq}{\end{equation}}
\newcommand{\ba}{\begin{array}}
\newcommand{\ea}{\end{array}}
\newcommand{\beann}{\begin{eqnarray*}}
\newcommand{\eeann}{\end{eqnarray*}}
\newcommand{\bea}{\begin{eqnarray}}
\newcommand{\eea}{\end{eqnarray}}
\newtheorem{lem}{Proposition}
\newtheorem{thm}{Theorem}
\newcommand{\BEAS}{\begin{eqnarray*}}
\newcommand{\EEAS}{\end{eqnarray*}}
\newcommand{\BEA}{\begin{eqnarray}}
\newcommand{\EEA}{\end{eqnarray}}
\newcommand{\BEQ}{\begin{equation}}
\newcommand{\EEQ}{\end{equation}}
\newcommand{\BIT}{\begin{itemize}}
\newcommand{\EIT}{\end{itemize}}
\newcommand{\BNUM}{\begin{enumerate}}
\newcommand{\ENUM}{\end{enumerate}}
\newcommand{\BMI}{\begin{minipage}}
\newcommand{\EMI}{\end{minipage}}
\newcommand{\BA}{\begin{array}}
\newcommand{\EA}{\end{array}}
\newcommand{\BC}{\begin{center}}
\newcommand{\EC}{\end{center}}
\newcommand{\sgn}{\mathop{\bf sgn}}
\newcommandx{\unsure}[2][1=]{\todo[linecolor=red,backgroundcolor=red!25,bordercolor=red,#1]{#2}}
\newcommandx{\change}[2][1=]{\todo[linecolor=blue,backgroundcolor=blue!25,bordercolor=blue,#1]{#2}}
\newcommandx{\info}[2][1=]{\todo[linecolor=OliveGreen,backgroundcolor=OliveGreen!25,bordercolor=OliveGreen,#1]{#2}}
\newcommandx{\improvement}[2][1=]{\todo[linecolor=Plum,backgroundcolor=Plum!25,bordercolor=Plum,#1]{#2}}
\newcommandx{\thiswillnotshow}[2][1=]{\todo[disable,#1]{#2}}
\authorrunning{Richt\'arik, Jahani, Ahipa{\c s}ao{\u g}lu, Tak\'a\v{c}}
\titlerunning{Alternating maximization: 8 sparse PCA formulations and efficient parallel codes}
\newcommand{\add}[1]{{#1}}
\newcommand{\remove}[1]{{\color{red}}}
\begin{document}

\title{Alternating maximization: unifying framework for 8 sparse PCA formulations and efficient parallel codes\thanks{MT was partially supported by National Science Foundation grants CCF-1618717, CMMI-1663256 and  CCF-1740796. 
}
}


\author{
Peter Richt\'{a}rik
  \and Majid Jahani   \and 
      Selin~Damla~Ahipa{\c s}ao{\u g}lu  
\and 
  Martin Tak\'{a}\v{c}
}


\institute{Peter Richt\'arik \at
              Visual Computing Center, Al-Khawarizmi Building, Thuwal 23955, Saudi Arabia \\
              \email{peter.richtarik@kaust.edu.sa}           
           \and
           Majid Jahani \at
           Industrial and Systems Engineering, 
200 West Packer Avenue, Bethlehem, PA 18015, USA
\email{majidjahani89@gmail.com}           
           \and
           Selin~Damla~Ahipa{\c s}ao{\u g}lu   \at
           Engineering Systems and Design, Sing. Univ. Tech. \& Design, 8 Somapah Road, Singapore
\email{ahipasa@gmail.com}
           \and
           Martin Tak\'a\v{c} \at
           Industrial and Systems Engineering, 
200 West Packer Avenue, Bethlehem, PA 18015, USA
\email{takac.mt@gmail.com}           
}

\date{Received: date / Accepted: date}

\maketitle

\begin{abstract}
Given a multivariate data set, sparse principal component analysis (SPCA) aims to extract several linear combinations of the variables that together explain the variance in the data as much as possible, while controlling the number of nonzero loadings in these combinations. In this paper we consider 8 different optimization formulations for computing a single sparse loading vector: we employ two norms for measuring  variance (L2, L1) and two sparsity-inducing norms (L0, L1), which are used in two ways (constraint, penalty). Three of our formulations, notably the one with L0 constraint and L1 variance, have not been considered in the literature. We give a unifying reformulation which we propose to solve  via the alternating maximization (AM) method. We show that AM is equivalent to GPower for all formulations. Besides this, we provide 24 efficient parallel SPCA implementations: 3 codes (multi-core, GPU and cluster) for each of the 8 problems. Parallelism in the methods is aimed at i) speeding up computations (our GPU code can be 100 times faster than an efficient serial code written in C++), ii) obtaining solutions explaining more variance and iii) dealing with big data problems (our cluster code can solve a 357 GB problem in a minute).
\keywords{sparse PCA \and  alternating maximization \and  GPower \and  big data analytics \and  unsupervised learning}
\end{abstract}

\section{Introduction}\label{sec:Int}

Principal component analysis (PCA) is an indispensable tool used for dimension reduction in virtually all areas of science and engineering, from machine learning, statistics, genetics and finance to computer networks \citep{Jol86}. Let $A \in \RR^{n\times p}$ denote a data matrix encoding $n$ samples (observations) of $p$ variables (features). PCA aims to  extract a few linear combinations of the columns of $A$, called principal components (PCs), pointing in mutually orthogonal directions, together explaining as much variance in the data as possible. If the columns of $A$ are centered, the problem of extracting the first PC can be written as \begin{equation}\label{eq:classical_and_robust}\max\{ \| Ax \| : \|x\|_2 \leq 1\},\end{equation} where $\|\cdot\|$ is a suitable norm for measuring variance. The solution $x$ of this optimization problem is  called the loading vector, $Ax$ (normalized) is the first PC. Further PCs can be obtained in the same way with $A$ replaced by a new matrix in a process called deflation \citep{M08}. Classical PCA employs the $L_2$ norm in the objective; using the $L_1$ norm instead  may alleviate  problems caused by outliers in the data and hence leads to a robust PCA model \citep{Kwak08}. \add{Robust formulations using objective functions that are not functions of the covariance matrix (as in  \cite{CFF13}) are also possible, but these are beyond our investigation.} 


As normally there is no reason for the optimal loading vectors defining the PCs to be sparse, they are usually combinations of all of the variables. In some applications, however, sparse loading vectors enhance the \emph{interpretability} of the components and are easier to store, which leads to the idea to  \emph{induce} sparsity in the loading vectors. This problem and approaches to it are known collectively as sparse PCA (SPCA); for some \add{fundamental} work, refer to  \cite{ZHT04,MWS06,AEJL07,ABG08, shen08,LZ09,JNRS10,LT11,MZX12}. \add{Recent reviews on the subject can be found in \cite{Tren14} and \cite{HTW15}.} \add{In addition, recently, there has been great interest in establishing theoretical properties of sparse PCA including consistency, rates of
convergence, minimax risk bounds for estimating eigenvectors and principal subspaces
and detection under various and usually high-dimensional statistical models. See  \citep{AW09}, \citep{VL13}, \citep{VCLR13}, and \citep{LV2015}.} \add{The importance of robust and sparse models is getting more attention from various communities. For example, Robust Principal Component Analysis (RPCA), sometimes referred to as the Principal Component Pursuit (PCP), which decomposes a data matrix in a low-rank matrix and a sparse matrix has been investigated for video and signal processing \citep{CLMW11,HRSV16, AB16,BSJKZ17} and inducing sparsity into robust estimators has been successful in robust outlier detection \citep{HRSV16}.
} A popular way of incorporating a sparsity-inducing mechanism into \remove{the above} optimization formulation \add{\eqref{eq:classical_and_robust}} is via either a sparsity-inducing constraint or penalty. Two of the most popular functions for this are the $L_0$ and $L_1$ norm of the loading vector $x$ (the $L_0$ ``norm'' of $x$, denoted by $\|x\|_0$, is the number of nonzeros in $x$).

\subsection{Eight optimization formulations}

In this paper we consider 8 optimization formulations for extracting a single sparse loading vector (i.e., for computing the first PC) arising as combinations of the following three modeling factors: we use two norms for measuring variance (classical $L_2$ and robust $L_1$) and two sparsity-inducing (SI) norms (cardinality $L_0$ and $L_1$), which are used in two different ways (as a constraint or a penalty). All have the form
\begin{equation}OPT = \max_{x\in X} f(x),\label{eq:orig_formulation}\end{equation}
with $X\subset \RR^p$ and $f$ detailed in Table~1. Note that if we set $s=p$ in the constrained or $\gamma=0$ in the penalized versions, the sparsity-inducing functions stop having any effect\footnote{In the $L_1$ penalized formulations this can be seen from the inequality $\|x\|_1 \leq \sqrt{\|x\|_0}\|x\|_2$.} and we recover the classical and robust PCA \eqref{eq:classical_and_robust}. Choosing $1\leq s < p$, $\gamma>0$ will have the effect of directly enforcing or indirectly encouraging sparsity in the solution $x$.

\begin{table}[!ht]
\begin{center}
\tiny 
\begin{tabular}{c c c c l l}
   \toprule
   \# & Variance & SI norm & SI norm usage &  \multicolumn{1}{c}{$X$} & \multicolumn{1}{c}{$f(x)$ \phantom{$\frac{\frac{1}{2}}{\frac{1}{2}}$}}  \\
\midrule
\textcolor{green!70!black}{  1 }& \textcolor{green!70!black}{ $L_2$} & \textcolor{green!70!black}{$L_0$ }& \textcolor{green!70!black}{ constraint} & \textcolor{green!70!black}{ $\{x \in \RR^p \;:\; \|x\|_2 \leq 1, \; \|x\|_0\leq s \}$       } & \textcolor{green!70!black}{ $\|Ax\|_2$} \phantom{$\frac{\frac{1}{2}}{\frac{1}{2}}$}   \\
      \hdashline
 \textcolor{green!70!black}{ 2 }& \textcolor{green!70!black}{$L_1$} & \textcolor{green!70!black}{$L_0$} & \textcolor{green!70!black}{constraint} & \textcolor{green!70!black}{$\{x \in \RR^p\;:\; \|x\|_2 \leq 1,\; \|x\|_0\leq s\}$    }    & \textcolor{green!70!black}{$\|Ax\|_1$}   \phantom{$\frac{\frac{1}{2}}{\frac{1}{2}}$}\\
      \hdashline
 \textcolor{blue}{  3} &\textcolor{blue}{ $L_2$ }&\textcolor{blue}{  $L_1$ }& \textcolor{blue}{ constraint } & \textcolor{blue}{ $\{x \in \RR^p\;:\; \|x\|_2 \leq 1,\; \|x\|_1\leq \sqrt{s}\}$ }& \textcolor{blue}{ $\|Ax\|_2$  } \phantom{$\frac{\frac{1}{2}}{\frac{1}{2}}$}\\
     \hdashline
  \textcolor{blue}{ 4 }& \textcolor{blue}{ $L_1$ }  & \textcolor{blue}{ $L_1$} &\textcolor{blue}{  constraint } &   \textcolor{blue}{$\{x \in \RR^p\;:\; \|x\|_2 \leq 1,\; \|x\|_1\leq \sqrt{s}\}$ } &  \textcolor{blue}{ $\|Ax\|_1$   } \phantom{$\frac{\frac{1}{2}}{\frac{1}{2}}$}\\
     \hdashline
   \textcolor{red}{5} & \textcolor{red}{$L_2$ }&\textcolor{red}{ $L_0$ }& \textcolor{red}{penalty  } & \textcolor{red}{$\{x\in \RR^p \;:\; \|x\|_2 \leq 1\}$                                   } & \textcolor{red}{$\|Ax\|_2^2 - \gamma \|x\|_0$} \phantom{$\frac{\frac{1}{2}}{\frac{1}{2}}$} \phantom{$\frac{\frac{1}{2}}{\frac{1}{2}}$}\\
    \hdashline
  \textcolor{red}{ 6} &  \textcolor{red}{$L_1$} & \textcolor{red}{$L_0$} & \textcolor{red}{penalty  }& \textcolor{red}{$\{x\in \RR^p\;:\; \|x\|_2 \leq 1\}$                                   } & \textcolor{red}{$\|Ax\|_1^2 - \gamma \|x\|_0$   \phantom{$\frac{\frac{1}{2}}{\frac{1}{2}}$}}\\
      \hdashline
 \textcolor{purple!50!blue}{ 7 }&\textcolor{purple!50!blue}{  $L_2$} & \textcolor{purple!50!blue}{ $L_1$} & \textcolor{purple!50!blue}{ penalty } & \textcolor{purple!50!blue}{ $\{x\in \RR^p\;:\; \|x\|_2 \leq 1\}$                        }            & \textcolor{purple!50!blue}{ $\|Ax\|_2 - \gamma \|x\|_1$ } \phantom{$\frac{\frac{1}{2}}{\frac{1}{2}}$} \\
     \hdashline
       \textcolor{purple!50!blue}{8} &  \textcolor{purple!50!blue}{$L_1$} & \textcolor{purple!50!blue}{$L_1$} & \textcolor{purple!50!blue}{penalty} & \textcolor{purple!50!blue}{$\{x\in \RR^p\;:\; \|x\|_2 \leq 1\}$  }                                  & \textcolor{purple!50!blue}{$\|Ax\|_1 - \gamma \|x\|_1$}   \phantom{$\frac{\frac{1}{2}}{\frac{1}{2}}$}\\
     \bottomrule
\end{tabular}
\end{center}
\caption{Eight sparse PCA optimization formulations; see \eqref{eq:orig_formulation}.}\label{tab:8main}
\end{table}

All 4 SPCA formulations of Table~1 involving $L_2$ variance were previously studied in the literature and are very popular. \remove{For instance,} \add{One of the earliest work, the well-known SCoTLASS (Simplified Component Technique-LASSO) method in \citet{Jolliffe03}, was for the $L_1$ penalized formulation. Although the original method is quite slow, faster numerical algorithms using projected gradient \citep{TJ06} and penalized matrix composition \citep{WTH09} was developed for SCoTLASS. The later one is an application of the conditional gradient algorithm as noted in \citep{LT11}.} \add{\citet{QLZ2013} considered a generalization of the problem with $L_1$ penalty, in which a mixed norm of $L_1$ and $L_2$ penalties is used.} \citet{AEJL07} solved a series of convex relaxations, based on semidefinite programming of the $L_0$ constrained $L_2$ variance problem, while \citet{ABG08} considered the $L_0$ penalized and constrained formulations. While, \citet{JNRS10} studied the $L_0$ and $L_1$ penalized versions, \citet{LT11} looked at all four. 
\add{Enforcing sparsity directly with an $L_0$ constrained formulation is NP-hard and it can't be approximated by an efficient approximation algorithm as shown in \cite{M-I16}. Therefore, there are only a few works that attempt to solve this  problem exactly; one recent notable study is \citet{BB2019}, which developed a branch and bound algorithm for this problem. In addition, \cite{BV16} discussed a hierarchy of optimality conditions for this problem.}

The $L_1$ constrained $L_1$ variance formulation was first proposed \remove{only recently,} by \citet{MZX12}. To the best of our knowledge, the remaining three $L_1$ variance formulations were not considered in the literature before. In particular, the $L_0$ constrained $L_1$ variance formulation is new---and is perhaps preferable as it directly constraints the cardinality of the loading vector $x$ without using any proxies.


\subsection{Reformulation and alternating maximization (AM) method} In all 8 formulations we introduce an additional (dummy) variable $y$, which allows us to propose a generic \emph{alternating maximization} method for solving them: i) for a fixed loading vector, find the best dummy variable (one maximizing the objective), then ii) fix the dummy variable and find the best loading vector; repeat steps i) and ii).  This and the resulting algorithms are described in detail in Section~\ref{sec:AMM}. The generic AM method is not limited to our choice of SPCA formulations. Indeed, it is applicable, for instance, if instead of measuring the variance using either the $L_1$ or the $L_2$ norm, we use any other norm. One critical feature shared by  the formulations in Table~1 is that steps i) and ii) of the AM method can be performed efficiently, in closed form, with the main computational burden in each step being a matrix-vector multiplication ($Ax$ in step i) and $A^Ty$ in step ii)).  Our method produces a sequence of loading vectors $x^{(k)}, \; k\geq 0$, with monotonically increasing values $f(x^{(k)})$.

Our approach of introducing a dummy variable and using AM is similar to that of \citet{JNRS10}, where it is done \emph{implicitly}, but mainly to that of \citet{Rich11SPARS}, where it is fully \emph{explicit}, albeit used for different purposes.

Besides providing a conceptual unification for solving all 8 formulations using a single algorithm (AM), the main theoretical result of this paper is establishing that, perhaps surprisingly, in all 8 cases, \emph{the AM method is equivalent to the GPower method}  \citep{JNRS10} applied to a certain derived objective function, with iterates being either the loading vectors or the dummy variables, depending on the formulation. This result is stated and proved in Section~\ref{sec:theory}.




\subsection{Parallelism}  Besides giving  a new unifying framework and a generic algorithm for solving a number of SPCA formulations, 5 of which were previously proposed in the literature and 3 not, our further contribution is in providing efficient strategies for parallelizing AM at two different levels: i) running AM in parallel from multiple starting points in order to obtain a solution explaining more variance and ii) speeding up the linear  algebra involved. This is described in detail in Section~\ref{sec:par}.

Moreover, we provide parallel open-source code\footnote{Open source code with efficient implementations of the algorithms developed in this paper is published here: \remove{\href{https://code.google.com/p/24am/}{https://code.google.com/p/24am/}}
\add{\href{https://github.com/optml/24am}{https://github.com/optml/24am}}.} implementing these parallelization strategies,
for each of our 8 formulations, on 3 computing architectures: i) \emph{multi-core machine}, ii) \emph{GPU-enabled computer}, and iii) \emph{computer cluster}. We also provide a serial code; however,  as nearly all modern computers are multi-core, the serial implementation only serves the purpose of a benchmark against which once can measure parallelization speedup. Hence, we provide a total of $8\times 3 = 24$ parallel sparse PCA codes based on AM. Numerical experiments with our multi-core, GPU and cluster codes are performed in Section~\ref{sec:arch}.

Parallelism in our codes serves several purposes:
\begin{enumerate}
\item \emph{Speeding up computations.}  As described above, the AM method computes a matrix-vector multiplication at every iteration; this can be parallelized. We find that our GPU implementations are faster than our multi-core implementations, which are, in turn,  considerably faster than the benchmark single-core codes.
\item \emph{Obtaining solutions explaining more variance.} In some applications, such as in the computation of RIP constants for compressed sensing \citep{BT10}, it is critical that a PC is computed with as high explained variance as possible.  The output of our 8 subroutines depends on the starting point used; it only finds \remove{local} \add{stationary} solutions. Running them repeatedly from different starting points and keeping the solution with the largest objective value results in a PC explaining more variance. There are several ways in which this can be done, we implement~4 (NAI = ``naive'', SFA = ``start-from-all'', BAT = ``batches'' and OTF = ``on-the-fly''); details are given in Section~\ref{sec:par}.  A naive (NAI) approach is to do this sequentially; a different possibility is to run the method from several or all starting points in parallel (BAT, SFA), possibly asynchronously (OTF).
    This way at each iteration we need to perform a matrix-matrix multiplication which, when computed in parallel, is performed significantly faster compared to doing the corresponding number of parallel matrix-vector multiplications, one after another.
\item \emph{Dealing with big data problems.} If speed matters, for problems of small enough size we recommend using a GPU, if available. Since GPUs have stricter memory limitations than multi-core workstations (a typical GPU has 6GB RAM, a multi-core machine could have 20GB RAM), one may need to use a high-memory multi-core workstation if the problem size exceeds the GPU limit. However, for large enough (=big data) problems, one will need to use a cluster. Our cluster codes partition $A$, store parts of it on different nodes, and do the computations in a distributed way.
\end{enumerate}

\textbf{Notation.} By $x$ and $y$ we denote column vectors in $\RR^p$ and $\RR^n$, respectively. The coordinates of a vector are denoted by subscripts (eg., $x_1, x_2,\dots$) while iterates are denoted by superscripts in brackets (eg., $x^{(0)}$, $x^{(1)}$, $\dots$). We reserve the letter $k$ for the iteration counter. By $\|x\|_0$ we refer to the cardinality (number of nonzero loadings) of vector $x$. The $L_1, L_2$ and $L_\infty$ norms are defined by $\|z\|_1 = \sum_i |z_i|$, $\|z\|_2=(\sum_i z_i^2)^{1/2}$ and $\|z\|_\infty=\max_i|z_i|$, respectively. For a scalar $t$, we let $[t]_+ = \max\{0,t\}$ and by $\sgn(t)$ we denote the sign of $t$.

\section{Alternating Maximization (AM) Method} \label{sec:AMM}

As outlined in the previous section, we will solve \eqref{eq:orig_formulation} by introducing a dummy variable $y$ into each of the 8 formulations and apply an AM method to the reformulation. First, notice that for any pair of conjugate norms $\|\cdot\|$ and $\|\cdot\|^*$, we have, by definition,
\begin{equation}\label{eq:dualnorm}\|z\| = \max_{\|y\|^* \leq 1} y^T z.\end{equation}
In particular,  $\|\cdot\|_2^* = \|\cdot\|_2$ and $\|\cdot\|_{1}^* = \|\cdot\|_\infty$.

Now, let $Y:=\{y\in \RR^n \;:\; \|y\|_2 \leq 1\}$ for the $L_2$ variance formulations and $Y:=\{y\in \RR^n \;:\; \|y\|_\infty \leq 1\}$  for the $L_1$ variance formulations. Further, let $F(x,y)$ be the function obtained from $f(x)$ after replacing $\|Ax\|$ with $ y^T Ax$ (resp.\ $\|Ax\|^2$ with $(y^T Ax)^2$). Then, in view of the above, \eqref{eq:orig_formulation} takes on the equivalent form
\begin{equation}\label{eq:main}OPT = \max_{x\in X} \max_{y \in Y} F(x,y).\end{equation}
That is, the 8 problems from Table~\ref{tab:8main} can be reformulated into the form \eqref{eq:main}; the  details can be found in Table~\ref{tab:8REFORM}.

\begin{table}[!ht]
\begin{center}
\footnotesize
\begin{tabular}{c l l l }
   \toprule 
   \# &   $X$ & $Y$ & $F(x,y)$ \phantom{$\frac{\frac{1}{2}}{\frac{1}{2}}$}\\
     \midrule
\textcolor{green!70!black}{   1 }& \textcolor{green!70!black}{  $\{x \in \RR^p \;:\; \|x\|_2 \leq 1, \; \|x\|_0\leq s \}$ }       &\textcolor{green!70!black}{  $\{y\in \RR^n \;:\; \|y\|_2 \leq 1\}$  }     &\textcolor{green!70!black}{  $y^T Ax$} \phantom{$\frac{\frac{1}{2}}{\frac{1}{2}}$}\\
    \hdashline
\textcolor{green!70!black}{  2 }& \textcolor{green!70!black}{ $\{x \in \RR^p \;:\; \|x\|_2 \leq 1,\; \|x\|_0\leq s\}$    }      & \textcolor{green!70!black}{$\{y\in \RR^n \;:\; \|y\|_\infty \leq 1\}$  }     & \textcolor{green!70!black}{$y^T Ax$  } \phantom{$\frac{\frac{1}{2}}{\frac{1}{2}}$}\\
    \hdashline
 \textcolor{blue}{ 3 }& \textcolor{blue}{ $\{x \in \RR^p \;:\; \|x\|_2 \leq 1,\; \|x\|_1\leq \sqrt{s}\}$ }  & \textcolor{blue}{$\{y\in \RR^n \;:\; \|y\|_2 \leq 1\}$     }  & \textcolor{blue}{$y^T Ax$  } \phantom{$\frac{\frac{1}{2}}{\frac{1}{2}}$}\\
   \hdashline
  \textcolor{blue}{  4} &   \textcolor{blue}{ $\{x \in \RR^p \;:\; \|x\|_2 \leq 1,\; \|x\|_1\leq \sqrt{s}\}$  } &  \textcolor{blue}{ $\{y\in \RR^n \;:\; \|y\|_\infty \leq 1\}$   }    &  \textcolor{blue}{ $y^T Ax$  } \phantom{$\frac{\frac{1}{2}}{\frac{1}{2}}$}\\
   \hdashline
 \textcolor{red}{ 5 }&  \textcolor{red}{$\{x \in \RR^p \;:\; \|x\|_2 \leq 1\}$   }                         & \textcolor{red}{$\{y\in \RR^n \;:\; \|y\|_2 \leq 1\}$    }   & \textcolor{red}{$(y^T Ax)^2 - \gamma \|x\|_0$ }\phantom{$\frac{\frac{1}{2}}{\frac{1}{2}}$}\\
     \hdashline
 \textcolor{red}{  6 }& \textcolor{red}{  $\{x \in \RR^p \;:\; \|x\|_2 \leq 1\}$  }                          & \textcolor{red}{ $\{y\in \RR^n \;:\; \|y\|_\infty \leq 1\}$   }    & \textcolor{red}{ $(y^T Ax)^2 - \gamma \|x\|_0$  } \phantom{$\frac{\frac{1}{2}}{\frac{1}{2}}$}\\
    \hdashline
  \textcolor{purple!50!blue}{ 7} & \textcolor{purple!50!blue}{  $\{x \in \RR^p \;:\; \|x\|_2 \leq 1\}$                     }       & \textcolor{purple!50!blue}{ $\{y\in \RR^n \;:\; \|y\|_2 \leq 1\}$   }    &\textcolor{purple!50!blue}{  $y^T Ax - \gamma \|x\|_1$ }  \phantom{$\frac{\frac{1}{2}}{\frac{1}{2}}$}\\
     \hdashline
 \textcolor{purple!50!blue}{ 8} & \textcolor{purple!50!blue}{ $\{x \in \RR^p \;:\; \|x\|_2 \leq 1\}$   }                         & \textcolor{purple!50!blue}{$\{y\in \RR^n \;:\; \|y\|_\infty \leq 1\}$  }    & \textcolor{purple!50!blue}{$y^T Ax - \gamma \|x\|_1$  } \phantom{$\frac{\frac{1}{2}}{\frac{1}{2}}$}\\
  \bottomrule
\end{tabular}
\end{center}
\caption{Reformulations of the problems from Table~\ref{tab:8main}.}\label{tab:8REFORM}
\end{table}

We propose to solve \eqref{eq:main} via Algorithm~\ref{alg:AM}.

\begin{algorithm}[h!]
\caption{Alternating Maximization (AM) Method.}
\label{alg:AM}
\begin{algorithmic}
 \State Select initial point $x^{(0)}\in\RR^p$; $k \leftarrow 0$
 \State \textbf{Repeat}
 \State \quad $y^{(k)} \leftarrow y(x^{(k)}) := \arg \max_{y\in Y} F(x^{(k)},y)$
 \State \quad $x^{(k+1)} \leftarrow x(y^{(k)}) := \arg \max_{x\in X} F(x, y^{(k)})$
 \State \textbf{Until} a stopping criterion is satisfied
\end{algorithmic}
\end{algorithm}


\subsection{Solving the subproblems} \label{sec:subproblems}

All 8 problems of Table~\ref{tab:8REFORM} enjoy the property that both of the steps (subproblems) of Algorithm~\ref{alg:AM} can be computed in closed form. In particular, each of these $8\times 2$ subproblems is of one of the 6 forms listed in Table~\ref{tab:5res}.

\begin{table}[ht!]
\tiny
\begin{center}
\begin{tabular}{ c c c c c }
 \toprule
 Subproblem \# & $\phi(z)$ & $Z$ & $z^*$ & $\phi(z^*)$ \phantom{$\frac{\frac{1}{2}}{\frac{1}{2}}$}\\
  \midrule

S1 & $a^Tz$ \; or \; $(a^Tz)^2$ &          $\|z\|_2\leq 1$          &     $\tfrac{a}{\|a\|_2}$  &   $\|a\|_2$ \add{or 
$\|a\|_2^2$}
\phantom{$\frac{\frac{1}{2}}{\frac{1}{2}}$
}\\

 \hdashline

S2& $a^Tz$ &          $\|z\|_\infty\leq 1$          &     $\sgn(a)$   &   $\|a\|_1$ \phantom{$\frac{\frac{1}{2}}{\frac{1}{2}}$}\\

  \hdashline

S3 & $a^Tz$ &          $\|z\|_2\leq 1, \; \|z\|_0\leq s$          &     $\tfrac{T_s(a)}{\|T_s(a)\|_2}$  &   $\|T_s(a)\|_2$  \phantom{$\frac{\frac{1}{2}}{\frac{1}{2}}$}\\

  \hdashline

S4 &  $a^Tz$ &          $\|z\|_2\leq 1, \; \|z\|_1\leq \sqrt{s}$          &     $\tfrac{V_{\lambda_{s}(a)}(a)}{\|V_{\lambda_{s}(a)}(a)\|_2}$  &   $\lambda_{s}(a)\sqrt{s}+\|V_{\lambda_{s}(a)}(a)\|_2$  \phantom{$\frac{\frac{1}{2}}{\frac{1}{2}}$}\\

 \hdashline

S5 &  $(a^Tz)^2-\gamma\|z\|_0$ &          $\|z\|_2\leq 1$          &     $\tfrac{U_\gamma(a)}{\|U_\gamma(a)\|_2}$  &   $\|U_\gamma(a)\|_2^2-\gamma\|U_\gamma(a)\|_0$ \phantom{$\frac{\frac{1}{2}}{\frac{1}{2}}$} \\

\hdashline

S6 & $a^Tz-\gamma\|z\|_1$ &          $\|z\|_2\leq 1$         &     $\tfrac{V_\gamma(a)}{\|V_\gamma(a)\|_2}$  &   $\|V_\gamma(a)\|_2 $   \phantom{$\frac{\frac{1}{2}}{\frac{1}{2}}$}\\

\bottomrule
\end{tabular}
\end{center}
\caption{Closed-form solutions of AM subproblems; $z^* := \arg\max_{z\in Z}{\phi(z)}$.}\label{tab:5res}
\end{table}

The proofs of these elementary results, many of which are of folklore nature, can be found, for instance, in \citep{LT11} (and partially in \citep{JNRS10}). The columns of Table~\ref{tab:5res}, from left to right, correspond to the objective function, feasible region, maximizer (optimal solution) and maximum (optimal objective value). The first result will be used both with $z=x$ and $z=y$, the second result with $z=y$ and the remaining four results with $z=x$.

Table~\ref{tab:5res} is brief at the cost of referring to a number of operators ($T_s,  U_\gamma$, $V_\gamma: \RR^m \mapsto \RR^m$ and $\lambda_s:\RR^m \mapsto \RR$), which we will now define. For a given vector $a\in\RR^m$ and integer $s\in \{0,1,\dots,m\}$, by $T_s(a)\in \RR^m$ we denote the vector obtained from  $a$ by retaining only the $s$ largest components of $a$ in absolute value, with the remaining ones replaced by zero. For instance, for $a=(1,-4,2,5,3)^T$ and $s=2$ we have $T_s(a) = (0,-4,0,5,0)^T$. For $\gamma\geq 0$, we define operators $U_\gamma$ and $V_\gamma$ element-wise for $i=1,\dots,m$ as follows:
\begin{equation}\label{eq:U_gamma}(U_\gamma(a))_i := a_i[\sgn(a_i^2-\gamma)]_+,\end{equation}
\begin{equation}\label{eq:V_gamma}(V_\gamma(a))_i := \sgn(a_i)(|a_i|-\gamma)_+.\end{equation}
 Furthermore, we let \[\lambda_s(a):=\arg\min_{\lambda\geq 0}{\lambda\sqrt{s}+\|V_\lambda(a)\|_2},\] which is the solution of the one-dimensional dual of the optimization problem in line 4 of Table~\ref{tab:5res}.

\subsection{The AM method for all 8 SPCA formulations} Combining Algorithm~\ref{alg:AM} with the subproblem solutions given in Table~\ref{tab:5res}, the AM method for all our 8 SPCA formulations can be written down concisely; see Algorithm~\ref{alg:main}.

\begin{algorithm}[h!]
\caption{AM method for solving the 8 SPCA formulations of Table~\ref{tab:8REFORM}.}
\label{alg:main}
\begin{algorithmic}
 \State Select initial point $x^{(0)}\in\RR^p$; $k \leftarrow 0$
 \State \textbf{Repeat}
 \State \quad $u = A x^{(k)}$
 \State \qquad \textbf{If} $L_1$ variance \textbf{then} $y^{(k)} \gets \sgn(u)$
 \State \qquad \textbf{If} $L_2$ variance \textbf{then} $y^{(k)} \gets u/\|u\|_2$
 \State \quad $v = A^T y^{(k)}$
 \State \qquad \textbf{If} {$L_0$ penalty} \textbf{then} $x^{(k+1)} \gets U_\gamma(v)/\|U_\gamma(v)\|_2$
 \State \qquad \textbf{If} {$L_1$ penalty} \textbf{then} $x^{(k+1)} \gets V_\gamma(v)/\|V_\gamma(v)\|_2$
 \State \qquad \textbf{If} {$L_0$ constraint} \textbf{then} $x^{(k+1)} \gets T_s(v)/\|T_s(v)\|_2$
 \State \qquad \textbf{If} {$L_1$ constraint} \textbf{then} $x^{(k+1)} \gets V_{\lambda_{s}(v)}(v)/\|V_{\lambda_{s}(v)}(v)\|_2$
 \State \quad $k \leftarrow k+1$
 \State \textbf{Until} a stopping criterion is satisfied
\end{algorithmic}
\end{algorithm}

Note that in the methods described in Algorithm~\ref{alg:main} it is \add{(in theory)} not necessary to normalize the vector $U_\gamma(v)$ (resp. $V_\gamma(v)$, $T_s(v)$, and  $V_{\lambda_{s}(a)}(v)$) when computing $x^{(k+1)}$ since clearly the iterate $y^{(k+1)}$, which depends on $x^{(k+1)}$, is invariant under positive scalings of $x^{(k+1)}$, \add{and $y_k$ is being either normalized, or is computed using $\sgn$ function}. 
We have to remember, however, to normalize the output.
\add{When the matrix $A$ is not well conditioned, it is still recommended to normalize vectors $U_\gamma(v)$, $V_\gamma(v)$, $T_s(v)$, and  $V_{\lambda_{s}(a)}(v)$) to eliminate the effect of limited floating point precision.}

The method is terminated when a maximum number of iterations $maxIt$ is reached or when \[\frac{F(x^{(k+1)},y^{(k)})}{F(x^{(k)},y^{(k-1)})} \leq 1 + tol,\] whichever happens sooner.

\section{Equivalence of AM and GPower} \label{sec:theory}
GPower (generalized power method)  \citep{JNRS10} is a  simple algorithm for maximizing a convex function $\Psi$ on a compact set $\Omega$, which works via a ``linearize and maximize'' strategy. If by $\Psi'(z^{(k)})$ we denote an arbitrary subgradient of $\Psi$ at $z^{(k)}$, then GPower performs the following iteration: \begin{equation}z^{(k+1)} = \arg\max_{z\in \Omega} \{\Psi(z^{(k)}) + \ve{\Psi'(z^{(k)})}{z-z^{(k)}}\} = \arg \max_{z \in \Omega} \ve{\Psi'(z^{(k)})}{ z}.\label{eq:GPower}\end{equation}

The following theorem, our main result, gives a nontrivial insight into the relationship of AM and GPower, when the former is applied to solving any of the 8 SPCA formulations considered, and GPower is applied to a derived problem, as described by the theorem.

\begin{thm}[AM = GPower]  The AM and GPower methods are equivalent in the following sense:
\begin{enumerate}
\item For the 4 constrained sparse PCA formulations of Table~\ref{tab:8main}, the $x$ iterates of the AM method applied to the corresponding reformulation of Table~\ref{tab:8REFORM} are identical to the iterates of the GPower method as applied to the problem of maximizing the convex function \[F_Y(x) \eqdef \max_{y\in Y} F(x,y)\] on $X$, started from \add{a feasible} $x^{(0)}$, \add{such that 
$\|A x^{(0)}\| \neq 0$.}
\item    For the 4 penalized sparse PCA formulations of Table~\ref{tab:8main}, the $y$ iterates of the AM method applied to the corresponding reformulation of Table~\ref{tab:8REFORM} are identical to the iterates of the GPower method as applied to the problem of maximizing the convex function \[F_X(y) \eqdef \max_{x \in X} F(x,y)\] on $Y$, started from \add{a feasible} $y^{(0)}$
\add{(we assume that $y^{(0)}$, $s$ or $\gamma$ 
are chosen such that $F_X(y_0)>0$).
}
\end{enumerate}
\end{thm}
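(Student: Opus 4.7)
The plan rests on viewing both updates through a common lens: AM performs alternating partial maximization of the saddle function $F(x,y)$, while GPower linearizes a convex outer function and re-maximizes on the feasible set. The link is the envelope (Danskin) lemma: a (sub)gradient of $F_X(y) = \max_{x\in X} F(x,y)$ at $y$ can be read off from $\nabla_y F(x^*(y), y)$ for any inner maximizer $x^*(y)$, and symmetrically for $F_Y$. In each of the 8 cases I would verify that the linearization produced by Danskin at the current iterate coincides (up to a positive rescaling, which does not affect the argmax on a norm ball centered at the origin) with the affine objective of the AM outer step; the equivalence then follows by induction on $k$ from the shared initial point.

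For Part~1 (the four constrained formulations) I start by rewriting $F_Y(x) = \max_{y \in Y} y^T A x = \|A x\|$, which is just the dual-norm identity \eqref{eq:dualnorm} and is manifestly convex in $x$. Since the AM inner step chooses $y^{(k)}$ to maximize $y \mapsto y^T A x^{(k)}$ on $Y$, the vector $A^T y^{(k)}$ lies in the subdifferential of $F_Y$ at $x^{(k)}$ (the standard characterization of the subdifferential of a dual norm composed with a linear map). Substituting this subgradient into \eqref{eq:GPower} yields
\[ x^{(k+1)}_{\mathrm{GP}} = \arg\max_{x\in X} \langle A^T y^{(k)},\, x\rangle = \arg\max_{x\in X} (y^{(k)})^T A x, \]
which is exactly the AM $x$-update; induction from $x^{(0)}$ closes Part~1.

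For Part~2 (the four penalized formulations) I first compute $F_X$ in closed form via lines S5 and S6 of Table~\ref{tab:5res}: formulations~5 and~6 yield $F_X(y) = \sum_{i=1}^{p} \bigl((A^T y)_i^2 - \gamma\bigr)_+$, while formulations~7 and~8 yield $F_X(y) = \|V_\gamma(A^T y)\|_2$. Both expressions are convex in $y$. Danskin then provides a (sub)gradient $\nabla_y F(x^{(k+1)}, y^{(k)})$ at $y^{(k)}$, where $x^{(k+1)}$ is the AM $x$-update from $y^{(k)}$: this equals $A x^{(k+1)}$ for formulations~7 and~8, and $2\bigl((y^{(k)})^T A x^{(k+1)}\bigr)\, A x^{(k+1)}$ for formulations~5 and~6---a nonnegative scalar multiple of $A x^{(k+1)}$. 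Since $Y$ is a norm ball centered at the origin the positive scaling drops out of the linearized objective, so the GPower step reduces to $\arg\max_{y \in Y} y^T A x^{(k+1)}$, which by lines~S1 and~S2 of Table~\ref{tab:5res} matches the AM $y$-update $y^{(k+1)} = \arg\max_{y\in Y} F(x^{(k+1)}, y)$.

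The step I expect to be most delicate is the convexity claim for $F_X$ in the $L_0$-penalized formulations~5 and~6: the inner function $F(\cdot, y)$ contains the non-convex term $-\gamma \|x\|_0$, so convexity of $F_X$ is not inherited from the joint function but emerges only after writing out the closed-form maximum as a sum of hinge-like convex functions of affine-in-$y$ arguments. I would also be careful about the degenerate case in Part~2 where the scalar coefficient $2(y^{(k)})^T A x^{(k+1)}$ vanishes---equivalent to $U_\gamma(A^T y^{(k)}) = 0$, i.e., the AM iteration has already stalled at the trivial fixed point; everywhere else the positive rescaling is harmless and the GPower and AM argmaxes coincide. Beyond these points the argument is a clean assembly of the dual-norm identity, Danskin's lemma, and the subproblem catalogue of Table~\ref{tab:5res}.
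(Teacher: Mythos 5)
Your proposal is correct and takes essentially the same route as the paper: in both, the AM inner maximizer supplies the (sub)gradient that GPower linearizes, so the GPower step becomes $\arg\max_{x\in X}\langle A^T y^{(k)},x\rangle$ in the constrained cases and $\arg\max_{y\in Y}\langle Ax^{(k+1)},y\rangle$ in the penalized cases (after discarding the harmless nonnegative factor $2\,(y^{(k)})^T A x^{(k+1)}$, which indeed equals $\|U_\gamma(A^Ty^{(k)})\|_2$ so your subgradient coincides with the paper's $2AU_\gamma(A^Ty^{(k)})$), and these are exactly the AM outer steps, closed by induction from the common starting point. The only difference is stylistic: the paper verifies explicit formulas for $F_Y'$ and $F_X'$ case by case (using your Danskin-style envelope argument verbatim only for formulations 7 and 8), whereas you apply that argument uniformly to all eight formulations, which is slightly cleaner and also flags the degenerate stalling cases the paper leaves implicit.
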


\begin{proof}
Recall that we wish to solve the problem
\[OPT =\max_{x \in X} f(x) = \max_{x\in X} \underbrace{\max_{y\in Y} F(x,y)}_{F_Y(x)} =  \max_{y\in Y} \underbrace{\max_{x\in X} F(x,y)}_{F_X(y)}.\]
We will now prove the equivalence for all 8 choices of $(f,X,Y,F)$ given in Tables~\ref{tab:8main} and \ref{tab:8REFORM}. In the proofs we will also refer to the closed form solutions of the subproblem (S1)--(S6), as detailed in Table~\ref{tab:5res}.

Consider first the constrained formulations: $1, 2, 3$ and $4$. By induction assume that the $k$-th  $x$-iterate ($x^{(k)}$) of AM is identical to the $k$-th iterate of GPower (for $k=0$ this is enforced by the assumption that GPower is started from $x^{(0)}$). By considering all 4 formulations individually, we will show that $x^{(k+1)}$ produced by AM and GPower are also identical.

\begin{enumerate}

\item[] \emph{Formulation 1}: Here we have \[f(x)=\|Ax\|_2, \qquad F(x,y) = y^T Ax,\]
\[X=\{x\in \RR^p \;:\; \|x\|_2\leq 1,\; \|x\|_0 \leq s\}, \qquad  Y=\{y \in \RR^{n}\;:\; \|y\|_{2}\leq 1\}.\]

First, note that
\[F_Y(x) = \max_{y\in Y}F(x,y) \overset{(S1)}{=} \|Ax\|_2,\]
the gradient of which is given by
\begin{equation}\label{eq:0980909s}F'_Y(x) = \frac{A^T A x}{\|Ax\|_2}.\end{equation}

Given $x^{(k)}$, in the AM method we have
\begin{equation}\label{eq:jsjkjk}y^{(k)} = \arg \max_{y\in Y} F(x^{(k)},y) \overset{(S1)}{=} \frac{Ax^{(k)}}{\|Ax^{(k)}\|_2}.\end{equation}

One iteration of GPower  started from $x^{(k)}$ will thus produce the iterate
\begin{eqnarray*}x^{(k+1)} \overset{\eqref{eq:GPower}}{=} \arg \max_{x\in X} \ve{ F_Y'(x^{(k)})}{ x} & \overset{\eqref{eq:0980909s}}{=} & \arg \max_{x \in X} \left\langle \frac{A^T Ax^{(k)}}{\|Ax^{(k)}\|_2},  x\right \rangle \\
&\overset{\eqref{eq:jsjkjk}}{=} & \arg \max_{x \in X} \ve{A^T y^{(k)}}{ x} \\
&\overset{(S3)}{=} & \frac{T_s(A^T y^{(k)})}{\|T_s(A^T y^{(k)})\|_2}.
\end{eqnarray*}

Observe that this is precisely how $x^{(k+1)}$ is computed in the AM method.

\item[] \emph{Formulation 2}: Here we have \[f(x)=\|Ax\|_1, \qquad F(x,y) = y^T Ax,\]
\[X=\{x\in \RR^p \;:\; \|x\|_2\leq 1,\; \|x\|_0 \leq s\}, \qquad  Y=\{y \in \RR^{n}\;:\; \|y\|_{\infty}\leq 1\}.\]

First, note that
\[F_Y(x) = \max_{y\in Y}F(x,y) \overset{(S2)}{=} \|Ax\|_1,\]
the \add{subgradient} of which is given by
\begin{equation}\label{eq:kjhkjhk8}F'_Y(x) = A^T \sgn(Ax).\end{equation}

Given $x^{(k)}$, in the AM method we have
\begin{equation}\label{eq:897987}y^{(k)} = \arg \max_{y\in Y} F(x^{(k)},y) \overset{(S2)}{=} \sgn(Ax^{(k)}).\end{equation}

One iteration of GPower  started from $x^{(k)}$ will thus produce the iterate
\begin{eqnarray*}x^{(k+1)} \overset{\eqref{eq:GPower}}{=} \arg \max_{x\in X} \ve{ F_Y'(x^{(k)})}{ x} & \overset{\eqref{eq:kjhkjhk8}}{=} & \arg \max_{x \in X} \left\langle A^T \sgn(Ax^{(k)}),  x\right \rangle \\
&\overset{\eqref{eq:897987}}{=} & \arg \max_{x \in X} \ve{A^T y^{(k)}}{ x} \\
&\overset{(S3)}{=} & \frac{T_s(A^T y^{(k)})}{\|T_s(A^T y^{(k)})\|_2}.
\end{eqnarray*}

Observe that this is precisely how $x^{(k+1)}$ is computed in the AM method.

\item[] \emph{Formulation 3}: Here we have \[f(x)=\|Ax\|_2, \qquad F(x,y) = y^T Ax,\]
\[X=\{x\in \RR^p \;:\; \|x\|_2\leq 1,\; \|x\|_1 \leq \sqrt{s}\}, \qquad  Y=\{y \in \RR^{n}\;:\; \|y\|_{2}\leq 1\}.\]

First, note that
\[F_Y(x) = \max_{y\in Y}F(x,y) \overset{(S1)}{=} \|Ax\|_2,\]
the gradient of which is given by
\begin{equation}\label{eq:aaaoiuo}F'_Y(x) = \frac{A^T Ax}{\|Ax\|_2}.\end{equation}

Given $x^{(k)}$, in the AM method we have
\begin{equation}\label{eq:lrtrtr}y^{(k)} = \arg \max_{y\in Y} F(x^{(k)},y) \overset{(S1)}{=} \frac{Ax^{(k)}}{\|Ax^{(k)}\|_2}.\end{equation}

One iteration of GPower  started from $x^{(k)}$ will thus produce the iterate
\begin{eqnarray*}x^{(k+1)} \overset{\eqref{eq:GPower}}{=} \arg \max_{x\in X} \ve{ F_Y'(x^{(k)})}{ x} & \overset{\eqref{eq:aaaoiuo}}{=} & \arg \max_{x \in X} \left\langle \frac{A^T Ax^{(k)}}{\|Ax^{(k)}\|_2},  x\right \rangle \\
&\overset{\eqref{eq:lrtrtr}}{=} & \arg \max_{x \in X} \ve{A^T y^{(k)}}{ x} \\
&\overset{(S4)}{=} & \frac{V_{\lambda_s(A^Ty^{(k)})}(A^Ty^{(k)})}{\|V_{\lambda_s(A^Ty^{(k)})}(A^Ty^{(k)})\|_2}.
\end{eqnarray*}

Observe that this is precisely how $x^{(k+1)}$ is computed in the AM method.

\item[] \emph{Formulation 4}: Here we have \[f(x)=\|Ax\|_1, \qquad F(x,y) = y^T Ax,\]
\[X=\{x\in \RR^p \;:\; \|x\|_2\leq 1,\; \|x\|_1 \leq \sqrt{s}\}, \qquad  Y=\{y \in \RR^{n}\;:\; \|y\|_{\infty}\leq 1\}.\]

First, note that
\[F_Y(x) = \max_{y\in Y}F(x,y) \overset{(S1)}{=} \|Ax\|_1,\]
\add{
the subgradient of which is given by
\begin{equation}\label{eq:aaaoiuo2}F'_Y(x) = A^T \sgn(Ax).\end{equation}
}

\add{
Given $x^{(k)}$, in the AM method we have
\begin{equation}\label{eq:lrtrtr2}y^{(k)} = \arg \max_{y\in Y} F(x^{(k)},y) \overset{(S2)}{=} \sgn(Ax^{(k)}).\end{equation}
}

One iteration of GPower  started from $x^{(k)}$ will thus produce the iterate
\add{
\begin{eqnarray*}x^{(k+1)} \overset{\eqref{eq:GPower}}{=} \arg \max_{x\in X} \ve{ F_Y'(x^{(k)})}{ x} & \overset{\eqref{eq:aaaoiuo2}}{=} & \arg \max_{x \in X} \left\langle 
A^T \sgn(Ax),  x\right \rangle \\
&\overset{\eqref{eq:lrtrtr2}}{=} & \arg \max_{x \in X} \ve{A^T y^{(k)}}{ x} \\
&\overset{(S4)}{=} & \frac{V_{\lambda_s(A^Ty^{(k)})}(A^Ty^{(k)})}{\|V_{\lambda_s(A^Ty^{(k)})}(A^Ty^{(k)})\|_2}.
\end{eqnarray*}
}

Observe that this is precisely how $x^{(k+1)}$ is computed in the AM method.

\end{enumerate}

Consider now the penalized formulations: $5, 6, 7$ and $8$. By induction assume that the $k$-th  $y$-iterate ($y^{(k)}$) of AM is identical to the $k$-th iterate of GPower (for $k=0$ this is enforced by the assumption that GPower is started from $y^{(0)}$). By considering all 4 formulations individually, we will show that $y^{(k+1)}$ produced by AM and GPower are also identical. Let $A = [a_1,\dots,a_p]$, i.e., the $i$-th column of $A$ is $a_i$.

\begin{enumerate}

\item[] \emph{Formulation 5}: Here we have \[f(x)=\|Ax\|_2^2 -\gamma \|x\|_0, \qquad F(x,y) = (y^T Ax)^2 - \gamma \|x\|_0,\]
\[X=\{x\in \RR^p \;:\; \|x\|_2\leq 1\}, \qquad  Y=\{y \in \RR^{n}\;:\; \|y\|_{2}\leq 1\}.\]

First, note that
\[F_X(y) = \max_{x\in X}F(x,y) \overset{(S5)}{=} \|U_\gamma(A^T y)\|_2^2 - \gamma \|U_\gamma(A^T y)\|_0 = \sum_{i=1}^p [(a_i^T y)^2-\gamma]_+,\]
the subgradient of which is given by
\begin{equation}\label{eq:0980hs908xx}F'_X(y) = 2\sum_{i=1}^p [\sgn((a_i^Ty)-\gamma)]_+ (a_i^T y) a_i \overset{\eqref{eq:U_gamma}}{=} 2 A U_\gamma(A^T y).\end{equation}

Given $y^{(k)}$, in the AM method we have
\begin{equation}\label{eq:js89djxx}x^{(k+1)} = \arg \max_{x\in X} F(x,y^{(k)}) \overset{(S5)}{=} \frac{U_\gamma(A^T y^{(k)})}{\|U_\gamma(A^T y^{(k)})\|_2}.\end{equation}

One iteration of GPower  started from $y^{(k)}$ will thus produce the iterate
\begin{eqnarray*}y^{(k+1)} \overset{\eqref{eq:GPower}}{=} \arg \max_{y\in Y} \ve{ F_X'(y^{(k)})}{ y} & \overset{\eqref{eq:0980hs908xx}}{=} & \arg \max_{\|y\|_\infty \leq 1} \ve{2 A U_\gamma(A^T y)}{ y} \\
&\overset{\eqref{eq:js89djxx}}{=} & \arg \max_{\|y\|_2\leq 1} \ve{Ax^{(k+1)}}{ y} \\
&\overset{(S1)}{=} & \frac{A x^{(k+1)}}{\|A x^{(k+1)}\|_2} .
\end{eqnarray*}

Observe that this is precisely how $y^{(k+1)}$ is computed in the AM method.

\item[] \emph{Formulation 6}: Here we have \[f(x)=\|Ax\|_1^2 -\gamma \|x\|_0, \qquad F(x,y) = (y^T Ax)^2 - \gamma \|x\|_0,\]
\[X=\{x\in \RR^p \;:\; \|x\|_2\leq 1\}, \qquad  Y=\{y \in \RR^{n}\;:\; \|y\|_{\infty}\leq 1\}.\]

First, note that
\[F_X(y) = \max_{x\in X}F(x,y) \overset{(S5)}{=} \|U_\gamma(A^T y)\|_2^2 - \gamma \|U_\gamma(A^T y)\|_0 = \sum_{i=1}^p [(a_i^T y)^2-\gamma]_+,\]
the subgradient of which is given by
\begin{equation}\label{eq:0980hs908}F'_X(y) = 2\sum_{i=1}^p [\sgn((a_i^Ty)-\gamma)]_+ (a_i^T y) a_i \overset{\eqref{eq:U_gamma}}{=} 2 A U_\gamma(A^T y).\end{equation}

Given $y^{(k)}$, in the AM method we have
\begin{equation}\label{eq:js89dj}x^{(k+1)} = \arg \max_{x\in X} F(x,y^{(k)}) \overset{(S5)}{=} \frac{U_\gamma(A^T y^{(k)})}{\|U_\gamma(A^T y^{(k)})\|_2}.\end{equation}

One iteration of GPower  started from $y^{(k)}$ will thus produce the iterate
\begin{eqnarray*}y^{(k+1)} \overset{\eqref{eq:GPower}}{=} \arg \max_{y\in Y} \ve{ F_X'(y^{(k)})}{ y} & \overset{\eqref{eq:0980hs908}}{=} & \arg \max_{\|y\|_\infty \leq 1} \ve{2 A U_\gamma(A^T y)}{ y} \\
&\overset{\eqref{eq:js89dj}}{=} & \arg \max_{\|y\|_\infty \leq 1} \ve{Ax^{(k+1)}}{ y} \\
&\overset{(S2)}{=} & \sgn(A x^{(k+1)}).
\end{eqnarray*}

Observe that this is precisely how $y^{(k+1)}$ is computed in the AM method.

\item[] \emph{Formulation 7}: Here we have \[f(x)=\|Ax\|_2 -\gamma \|x\|_1, \qquad F(x,y) = y^T Ax - \gamma \|x\|_1,\]
\[X=\{x\in \RR^p \;:\; \|x\|_2\leq 1\}, \qquad  Y=\{y \in \RR^{n}\;:\; \|y\|_{2}\leq 1\}.\]

Note that the functions  $y \mapsto F(x,y)$ are linear and that, by definition, $F_X(y) = \max_{x\in X} F(x,y)$. Moreover, note that the gradient of $y \mapsto F(x,y)$ at $y$ is equal to $Ax$. Hence, if $x$ is any vector that maximizes $F(x,y^{(k)})$ over $X$, then $Ax$ is a subgradient of $F_X$ at $y^{(k)}$. Note that this is precisely how $x^{(k+1)}$ is defined in the AM method: $x^{(k+1)} = \arg \max_{x\in X} F(x,y^{(k)})$. Hence, $Ax^{(k+1)}$ is a subgradient of $F_X$ at $y^{(k)}$ and one iteration of GPower  started from $y^{(k)}$ will produce the iterate
\[y^{(k+1)} \overset{\eqref{eq:GPower}}{=} \arg \max_{y\in Y} \ve{ F_X'(y^{(k)})}{ y} = \arg \max_{\|y\|_2 \leq 1} \ve{A x^{(k+1)}}{ y} \overset{(S1)}{=} \frac{A x^{(k+1)}}{\|A x^{(k+1)}\|_2} .\]
Observe that this is precisely how $y^{(k+1)}$ is computed in the AM method.

\item[] \emph{Formulation 8}: Here we have \[f(x)=\|Ax\|_1 -\gamma \|x\|_1, \qquad F(x,y) = y^T Ax - \gamma \|x\|_1,\]
\[X=\{x\in \RR^p \;:\; \|x\|_2\leq 1\}, \qquad  Y=\{y \in \RR^{n}\;:\; \|y\|_{\infty}\leq 1\}.\]

Note that the functions  $y \mapsto F(x,y)$ are linear and that, by definition, $F_X(y) = \max_{x\in X} F(x,y)$. Moreover, note that the gradient of $y \mapsto F(x,y)$ at $y$ is equal to $Ax$. Hence, if $x$ is any vector that maximizes $F(x,y^{(k)})$ over $X$, then $Ax$ is a subgradient of $F_X$ at $y^{(k)}$. Note that this is precisely how $x^{(k+1)}$ is defined in the AM method: $x^{(k+1)} = \arg \max_{x\in X} F(x,y^{(k)})$. Hence, $Ax^{(k+1)}$ is a subgradient of $F_X$ at $y^{(k)}$ and one iteration of GPower  started from $y^{(k)}$ will produce the iterate
\[y^{(k+1)} \overset{\eqref{eq:GPower}}{=} \arg \max_{y\in Y} \ve{ F_X'(y^{(k)})}{ y} = \arg \max_{\|y\|_\infty \leq 1} \ve{A x^{(k+1)}}{ y} \overset{(S2)}{=} \sgn(A x^{(k+1)}) .\]
Observe that this is precisely how $y^{(k+1)}$ is computed in the AM method.
\end{enumerate}
\end{proof}

Having established equivalence between AM and GPower,  convergence to a \remove{local solution} \add{stationary point} of the AM method for all 8 SPCA formulations follows from the theory developed by \citet{JNRS10} and \citet{LT11}.

\section{Embedding AM within a Parallel Scheme}\label{sec:par}

In this section we describe several approaches for embedding Algorithm~\ref{alg:main} (AM) within a parallel scheme for solving $l$ identical SPCA problems, started from a number of starting points, $x^{(0,1)}, \dots, x^{(0,l)}$. This is done in order to obtain a loading vector explaining more variance and will be discussed in more detail in Section~\ref{sec:hunt}.

As we will see, it may not necessarily be most efficient to solve \emph{all} $l$ problems simultaneously. Instead, we  consider a class of parallelization schemes where we divide the $l$ problems  into ``batches'' of $r$ problems each, and solve each batch of $r$ problems simultaneously. In this setting at each iteration we need to perform identical operations in parallel, notably matrix-vector multiplications $Ax^{(k,1)}, \dots, Ax^{(k,r)}$ and $A^Ty^{(k,1)}, \dots, A^Ty^{(k,r)}$. It is useful to view the sequence of matrix-vector products as a single matrix-matrix product, e.g., $A[x^{(k,1)},\dots, x^{(k,r)}]$ in the first case, and use optimized libraries for parallelization. This simple trick leads to considerable speedups when compared to other approaches. We use similar ideas for the parallel evaluation of the operators. Note that even in the $l=1$ case, i.e, if we wish to run SPCA from a single starting point only, there is scope for parallelization of the matrix-vector products and function evaluations. Hence, parallelization in our method serves two purposes:
\begin{enumerate}
\item to obtain solutions explaining more variance by solving the problem from several starting points (we choose $l>1$),
\item to speed up computations by parallelizing the linear algebra involved (this applies to both $l=1$ and $l>1$ cases).
\end{enumerate}

\begin{figure}[!ht]
 \centering
\includegraphics[width=4in]{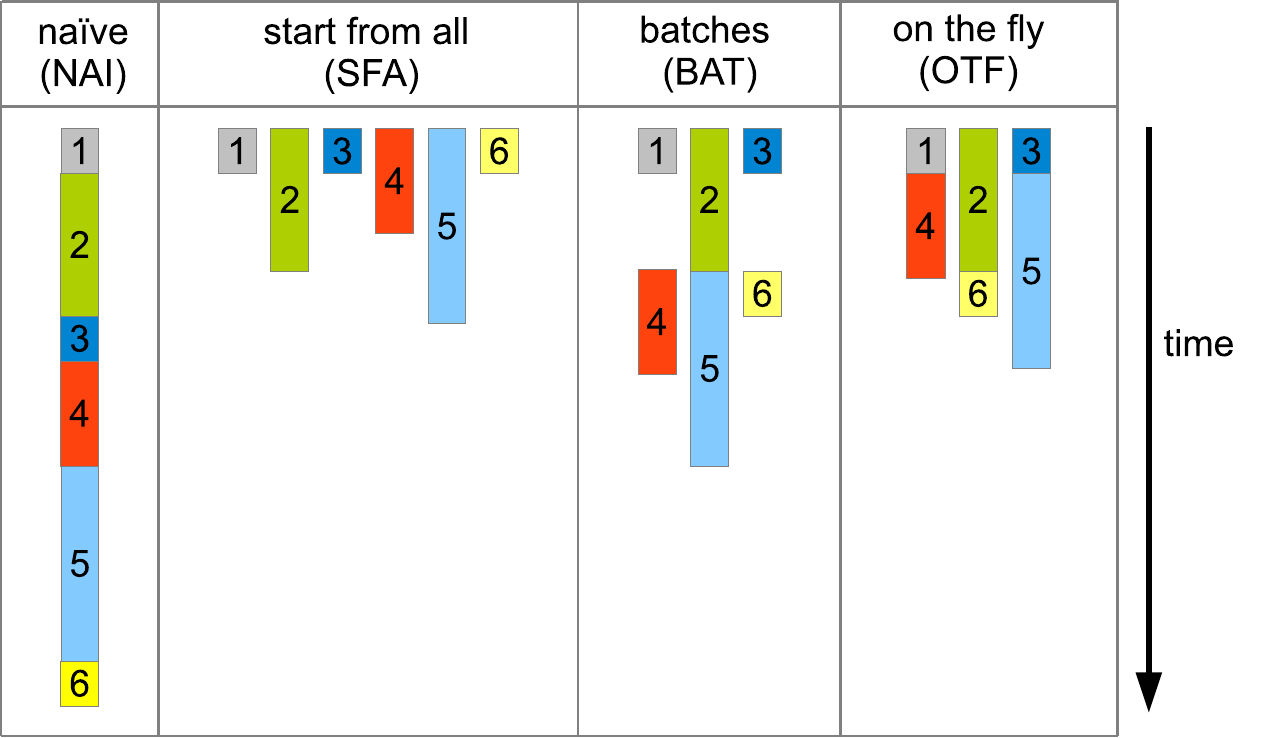}
 \caption{Four ways of embedding Algorithm 2 (AM) in a parallel scheme. In this example we run AM on the same problem $l=6$ times, using different (random) starting points.}
 \label{fig:4parallel}
\end{figure}

In particular, in this section we describe 4 parallelization approaches:
\begin{itemize}
\item NAI = ``naive'' ($r=1$),
\item SFA = ``start-from-all'' ($r=l$),
\item BAT = ``batches'' ($1\leq r \leq l$)
\item OTF = ``on-the-fly'' (BAT improved by a dynamic replacement strategy to reduce idle time).
\end{itemize}

The working of these 4 approaches is illustrated in Figure~\ref{fig:4parallel} in a situation with $l=6$. In what follows we describe the methods informally, in a narrative style, with a suitable choice of numerical experiments illustrating the differences between the ideas.

\subsection{The hunt for more explained variance} \label{sec:hunt}
\begin{figure}[h!]
 \centering
 \includegraphics[scale = 0.2]{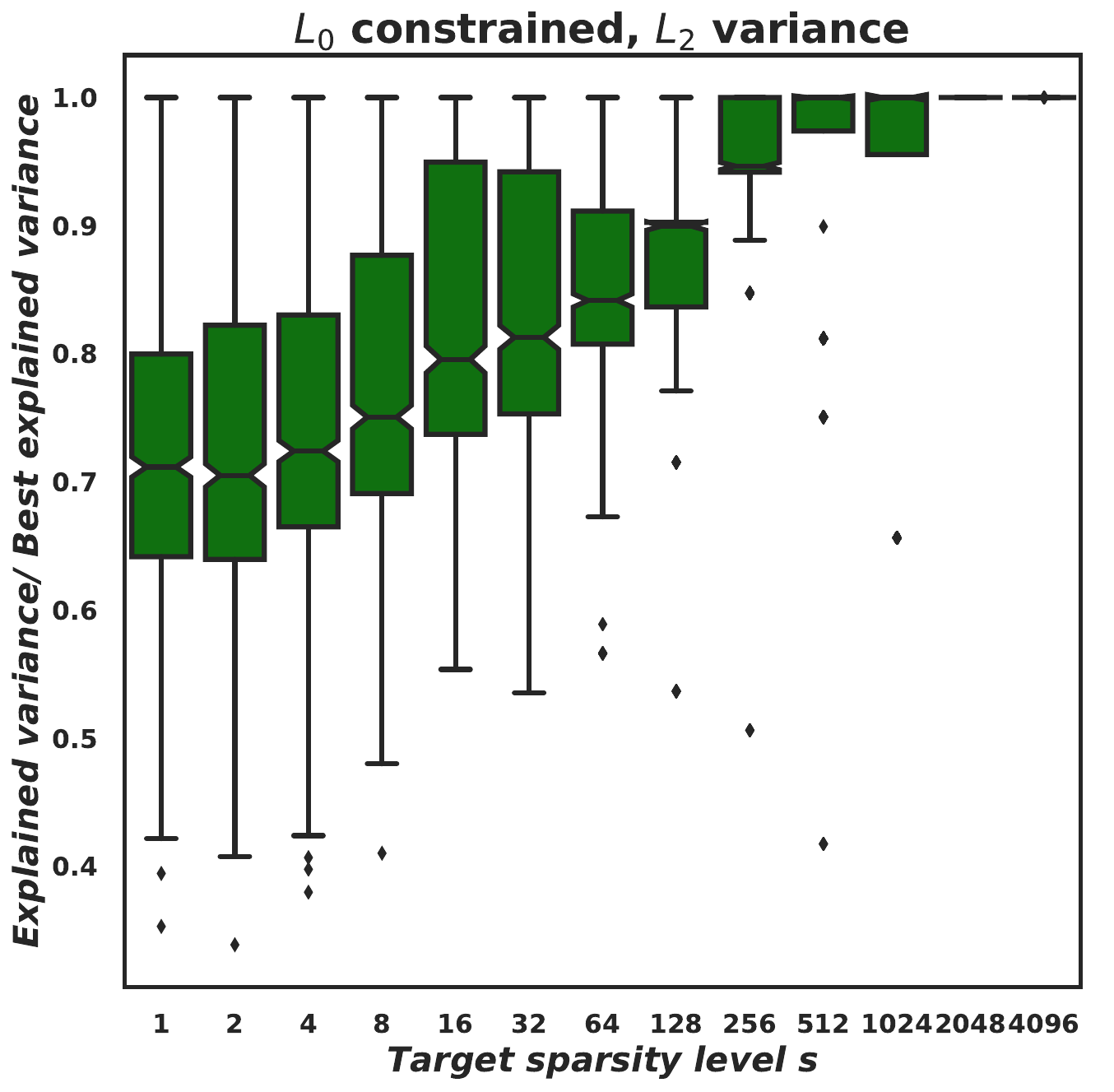}
 \includegraphics[scale = 0.2]{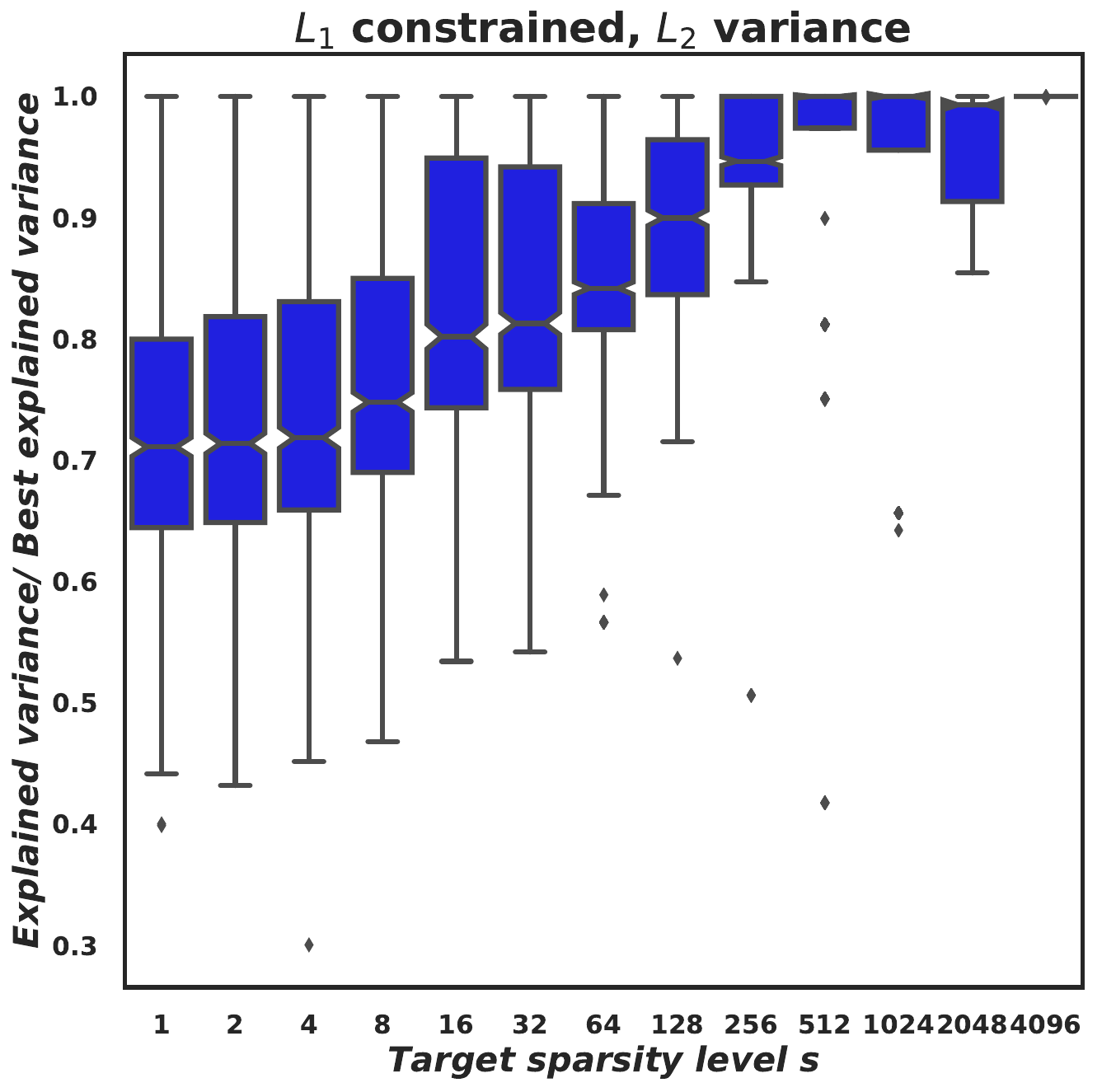}
  
 \includegraphics[scale = 0.2]{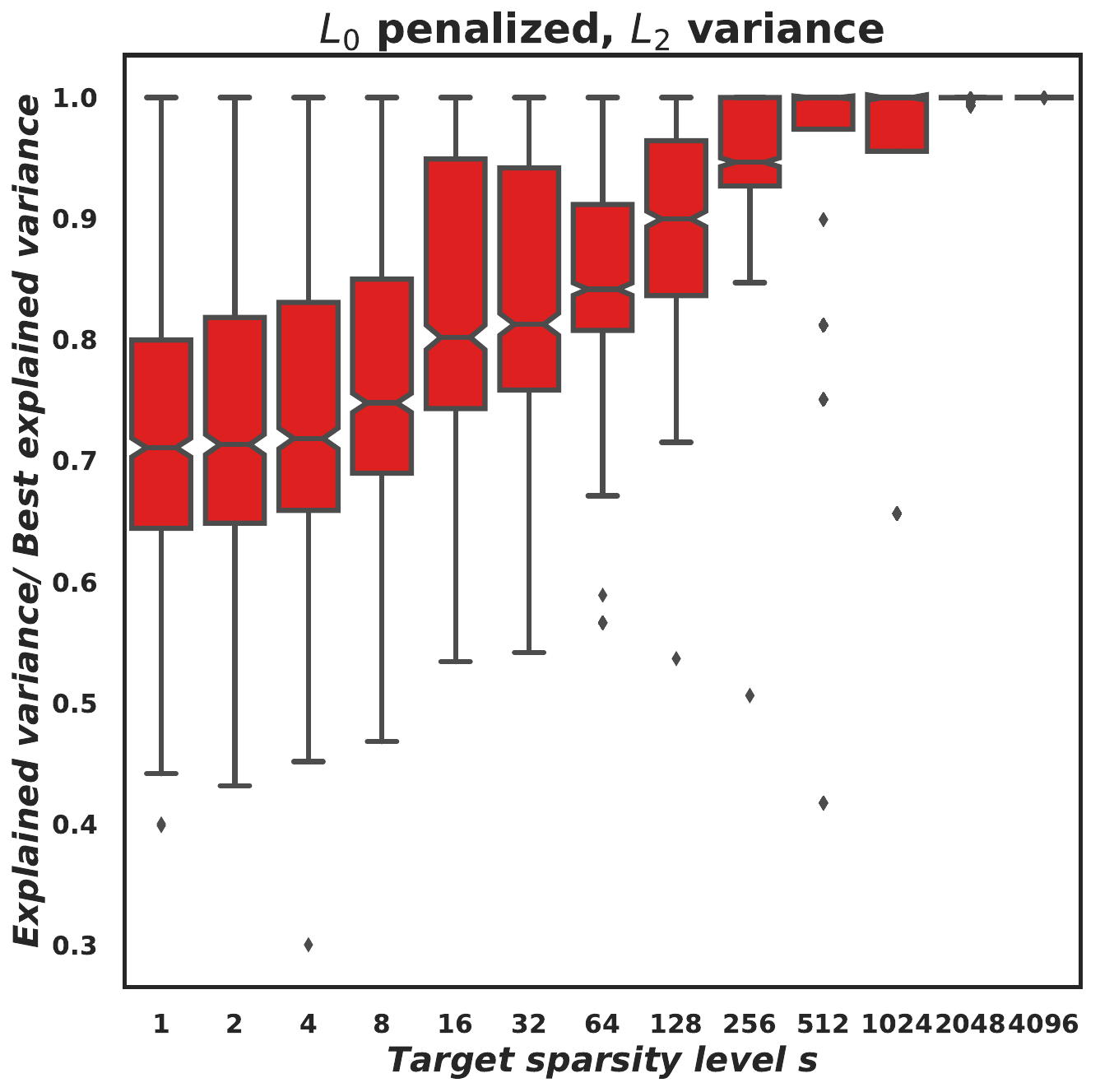}
 \includegraphics[scale = 0.2]{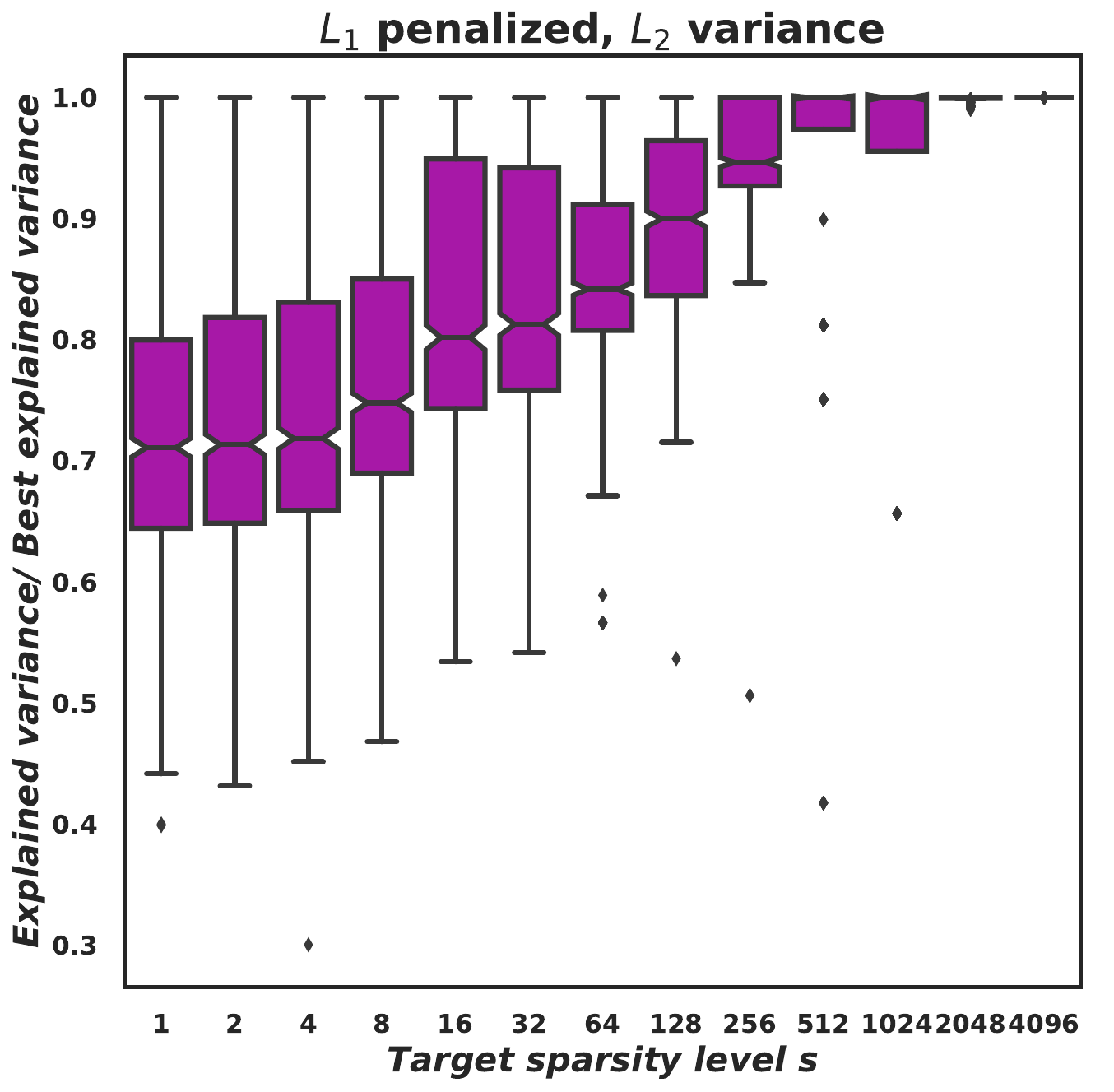}
 
  \includegraphics[scale = 0.2]{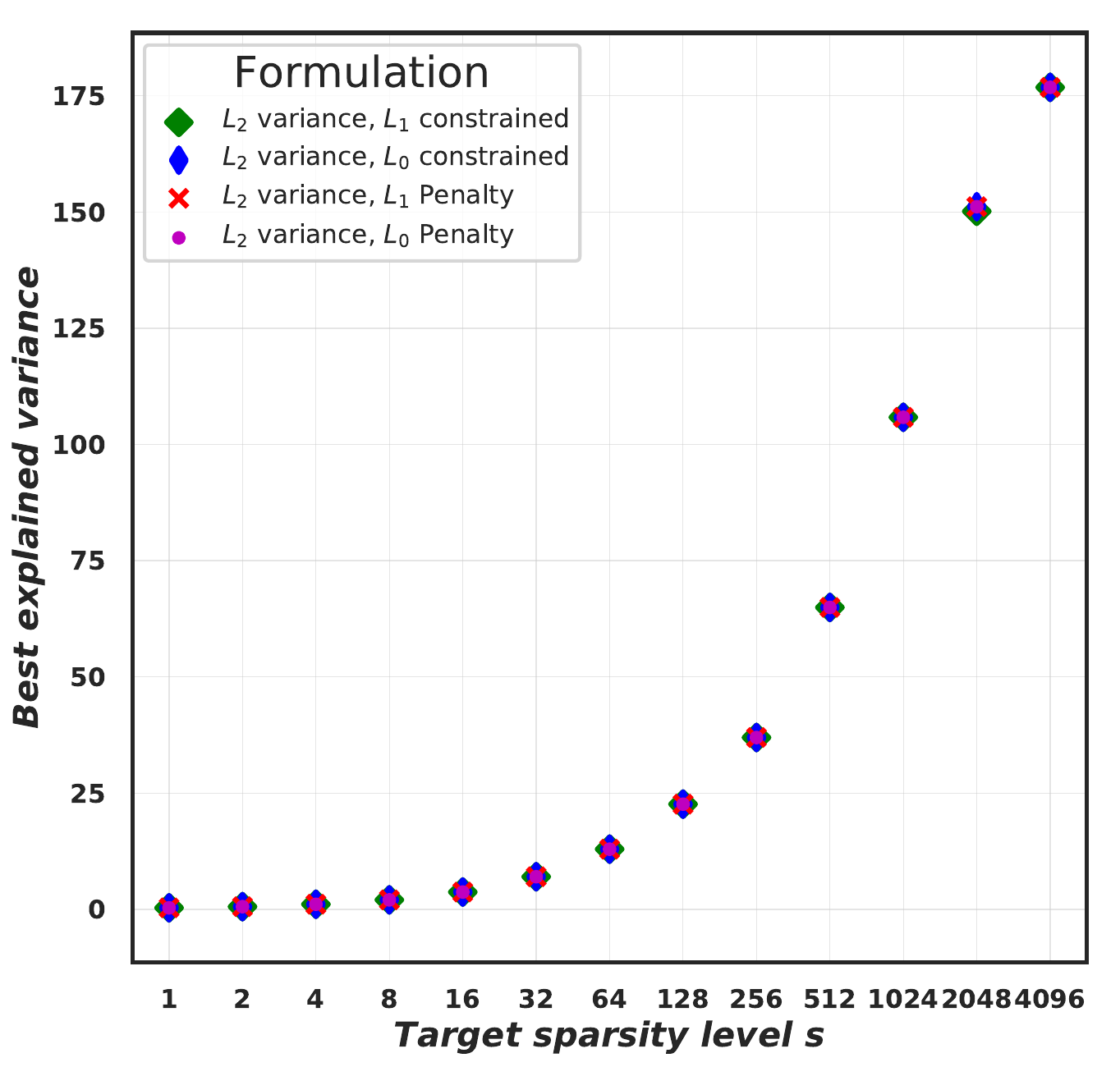}
  \includegraphics[scale = 0.2]{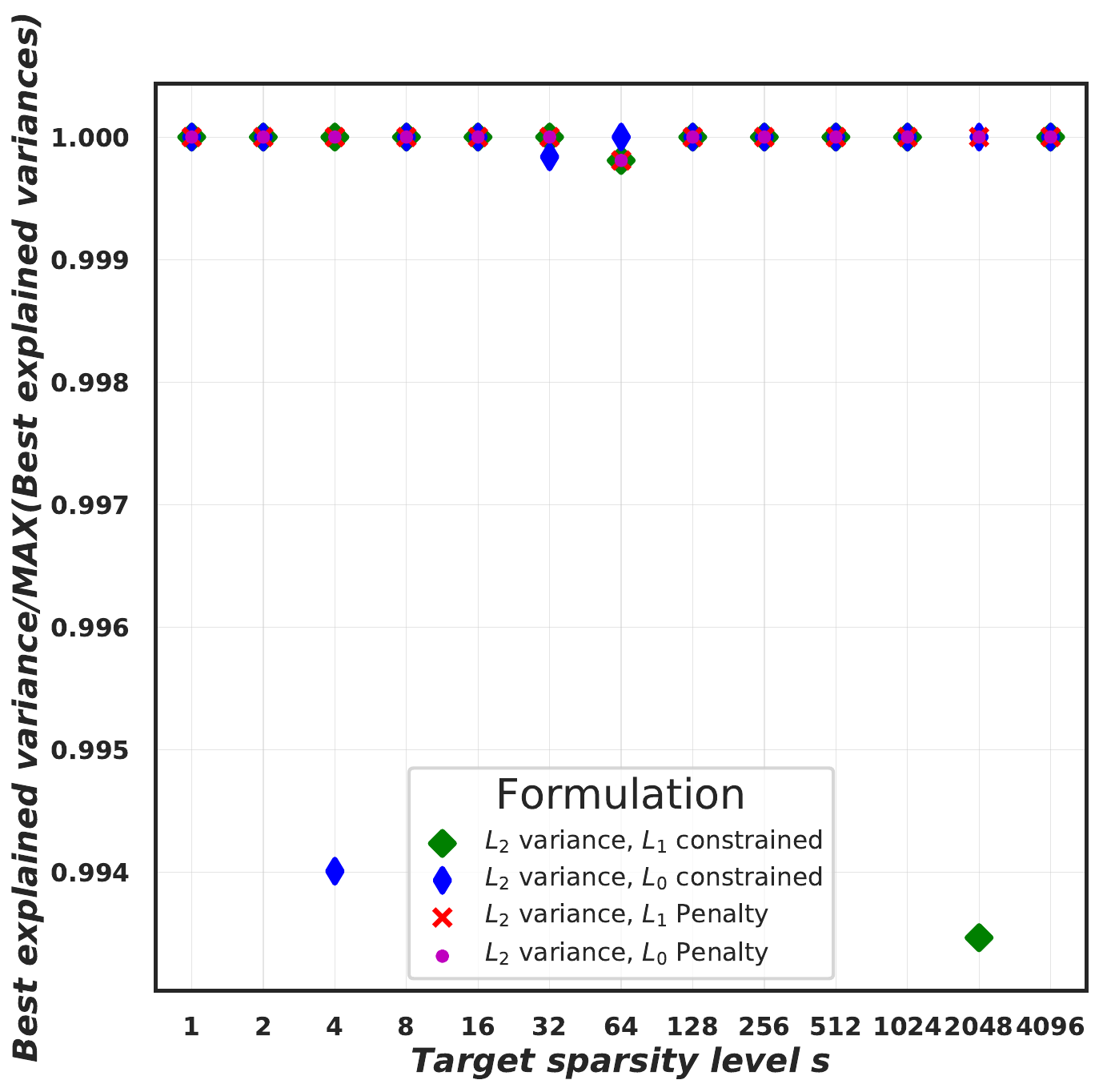}

 \caption{It may be easy to converge to a poor  solution (corresponding $L_2$ variance SPCA problems) for AT\&T Database of Faces.}
 \label{fig:BPmsL_2}
\end{figure}
As shown by \citet{JNRS10} and \citet{LT11} for GPower, and due to our equivalence theorem (Theorem 1), we know that Algorithm~2 (AM) is only able to converge to \remove{a local solution} \add{a stationary point rather than a global solution}.  Moreover, quality of the solution will depend on the starting point (SP) $x^{(0)}$ used. When the algorithm is run just once, the quality of the obtained solution, in terms of the objective value (or explained variance), can be poor. Hence, if the amount of explained variance is important, it will be useful to run the method repeatedly from a number of different SPs. \remove{In this and all subsequent experiments we  generated $A\in\RR^{n\times p}$ with independent and uniformly distributed entries from $[-1,1]$. Here chose $n=512$ and $p=2,048$ (and renormalized the columns so that their norms are uniformly distributed on $[0,1]$)}
\add{We considered ``AT\&T Database of Faces" data set\footnote{\href{https://www.kaggle.com/kasikrit/att-database-of-faces/data}{https://www.kaggle.com/kasikrit/att-database-of-faces/data}}, which contains 400 images, and the size of each image is 92x112 pixels. After reshaping the data set, the data matrix has 400 rows and 10304 columns. We normalized each row of the matrix, and centralized each column of the normalized matrix
and solved the corresponding SPCA problems described in Table \ref{tab:8REFORM} 
with $s = 1,2,4,\dots, 4096$}. For each $s$ we  run AM from $l=1,000$ randomly generated SPs with $maxIt=200$ and  $tol=10^{-6}$. 
\add{It is noteworthy to mention that the explained variance for the cases with $L_2$ and $L_1$ variance are considered as $\|Ax\|^2_2$ and $\|Ax\|_1$, respectively. The results are given in Figures~\ref{fig:BPmsL_2} and \ref{fig:BPmsL_1}. In the first two rows of Figures \ref{fig:BPmsL_2} and \ref{fig:BPmsL_1}, the vertical axis corresponds to the amount of explained variance of a particular solution compared to the best solution found with respect to the target sparsity level (horizontal axis) with the above setting. For the cases with $L_1$ constrained, we considered $\lambda_s(a)$ to be updated for some predefined iterations (let's say 10), and $\lambda_s(a)$ would be fixed afterwards in order to have a stable $F(x,y)$. The same trick can be applied to the cases with penalty (cases 5-8). That is, we control sparsity level by $\gamma$ for some predefined iterations; to do so, in order to reach the sparsity level of $s$, first, we sort the vector ``$a$" based on its squared and absolute value for the operators $U_{\gamma}(a)$ and $V_{\gamma}(a)$, respectively. Then, we set $\gamma$ to be the $s^{\text{th}}$ element of the new sorted vector, and by doing so we can guarantee the sparsity level of the output vector to be $s$ for the predefined iterations, and we make $\gamma$ fixed afterwards. Overall, it means that there is no need to tune $\gamma$ in the aforementioned cases. In the third rows of the Figures \ref{fig:BPmsL_2} and \ref{fig:BPmsL_1}, the left ones show the best explained variance for the formulations with $L_2$ and $L_1$ variances among 1000 runs; and the right ones highlight that the best explained variance for all formulations are close to each other.}

\begin{figure}[!ht]
 \centering
 \includegraphics[scale = 0.2]{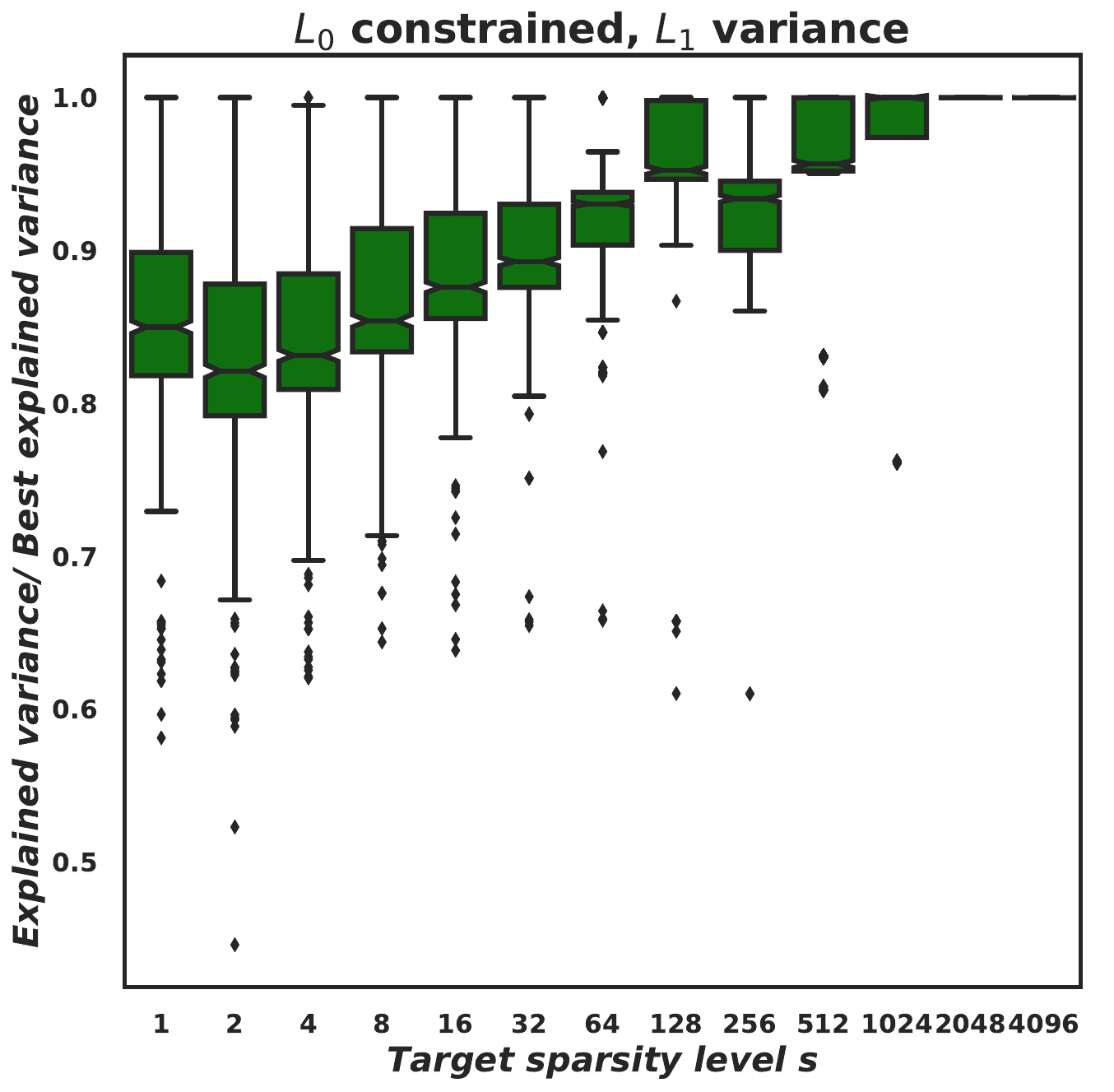}
 \includegraphics[scale = 0.2]{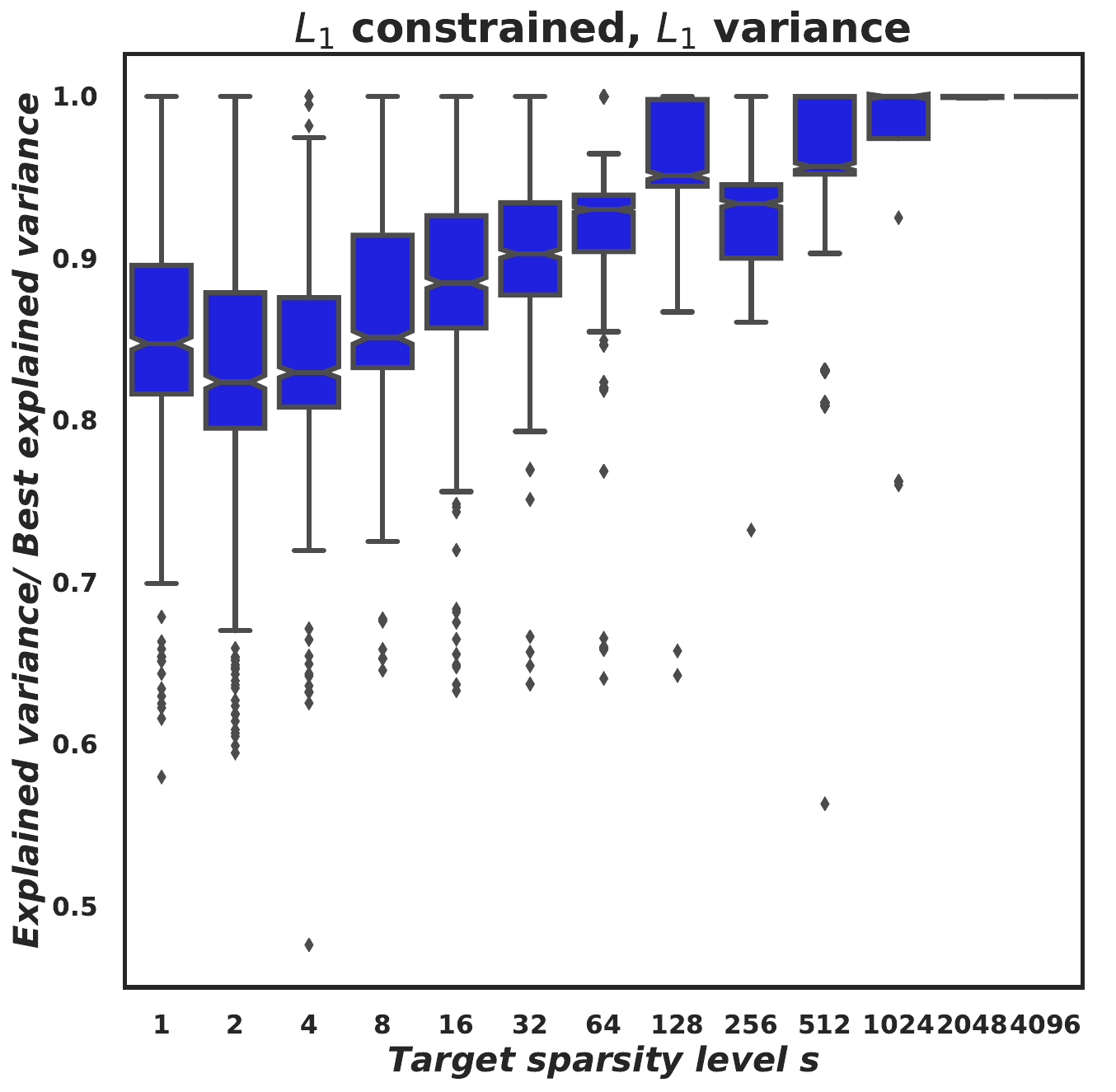}
  
 \includegraphics[scale = 0.2]{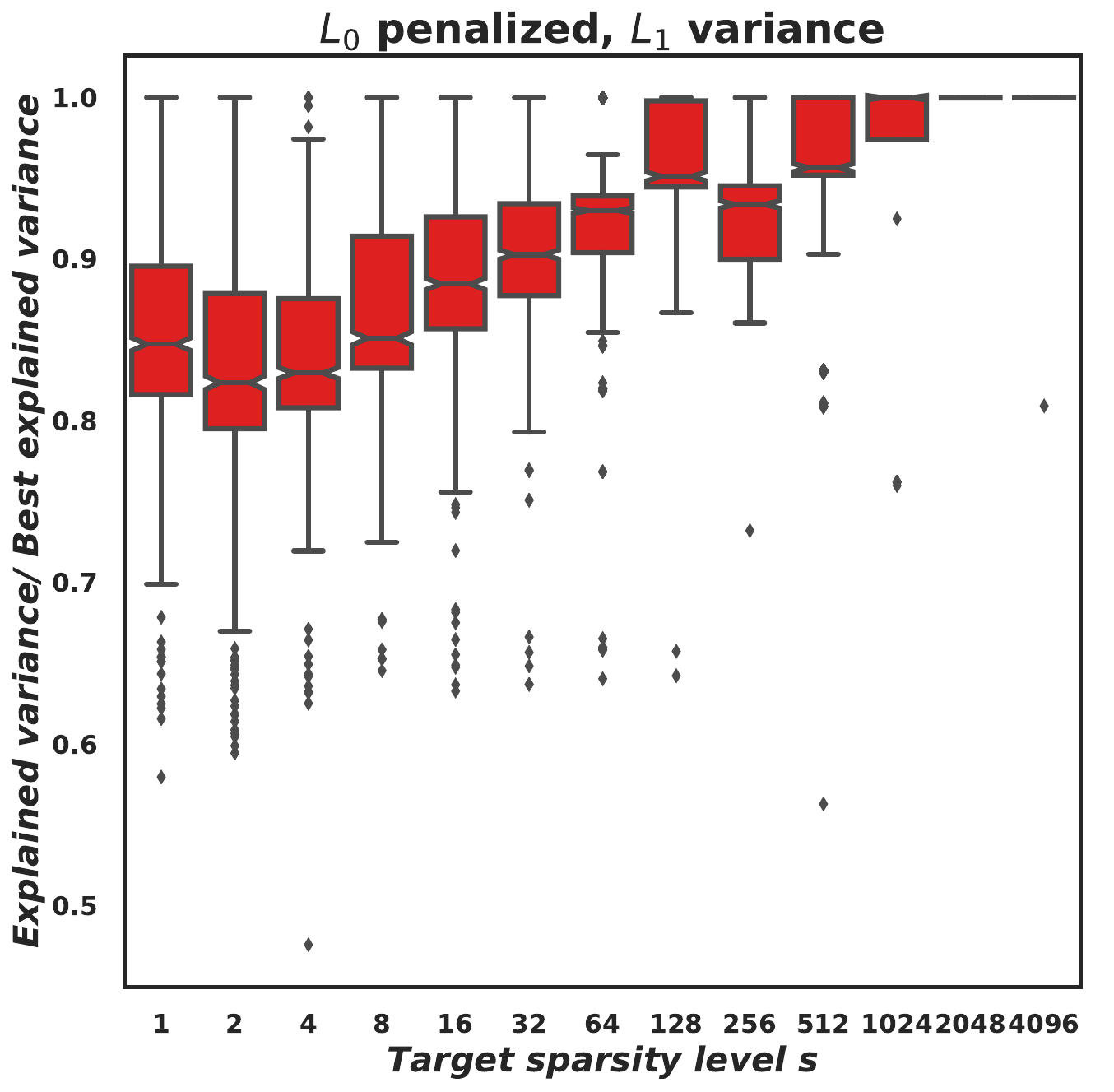}
 \includegraphics[scale = 0.2]{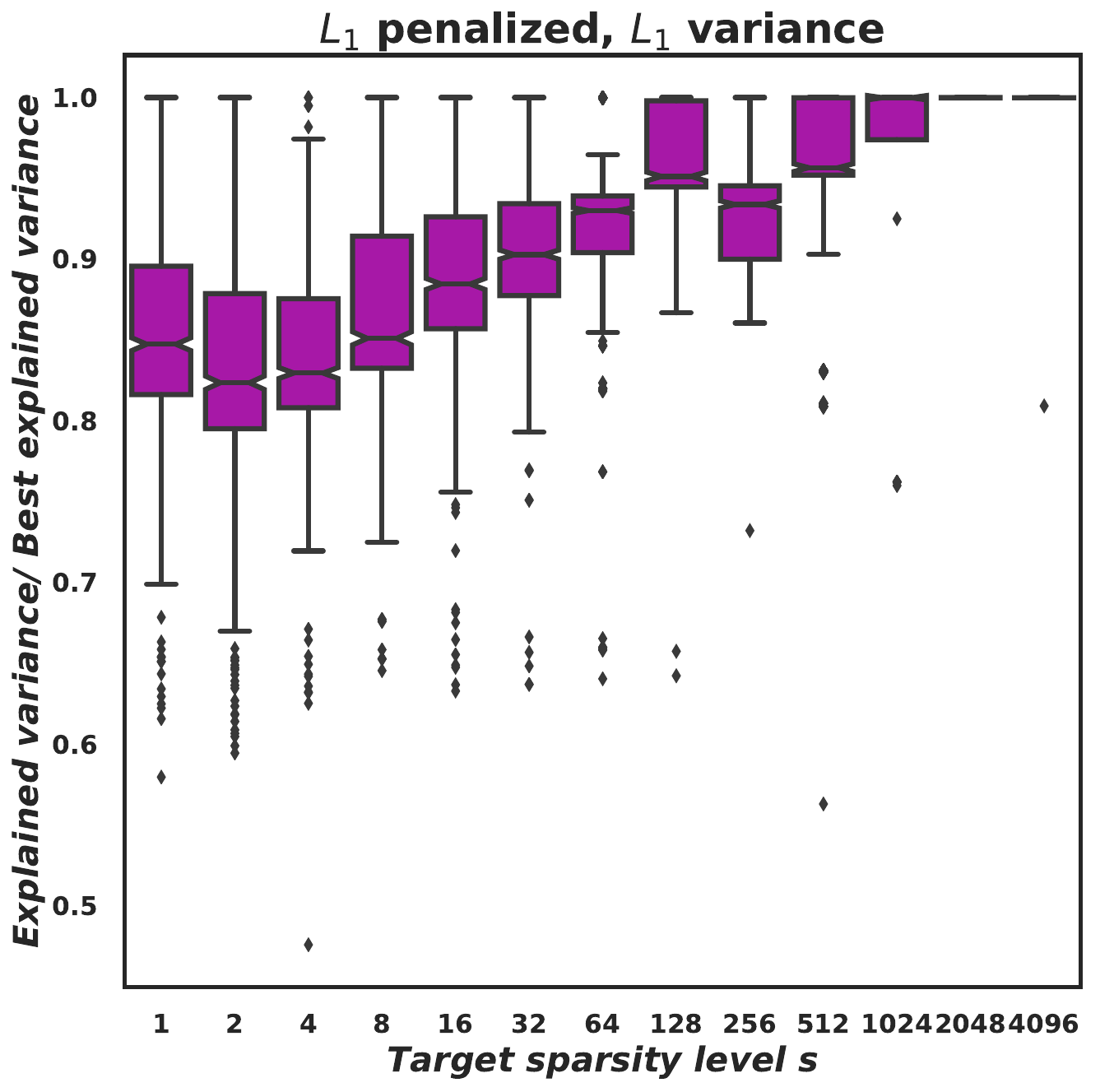}
   \includegraphics[scale = 0.2]{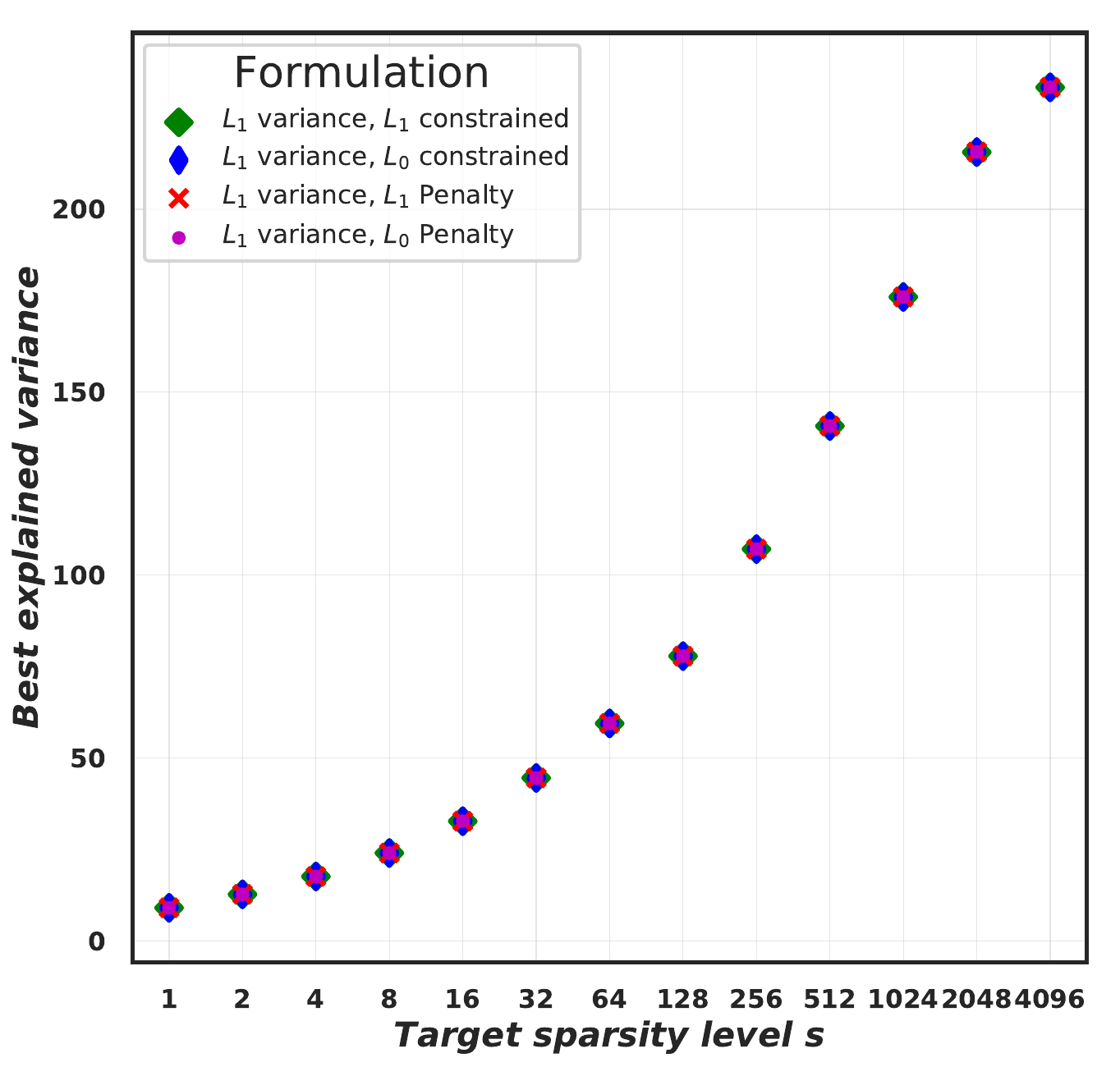}
  \includegraphics[scale = 0.2]{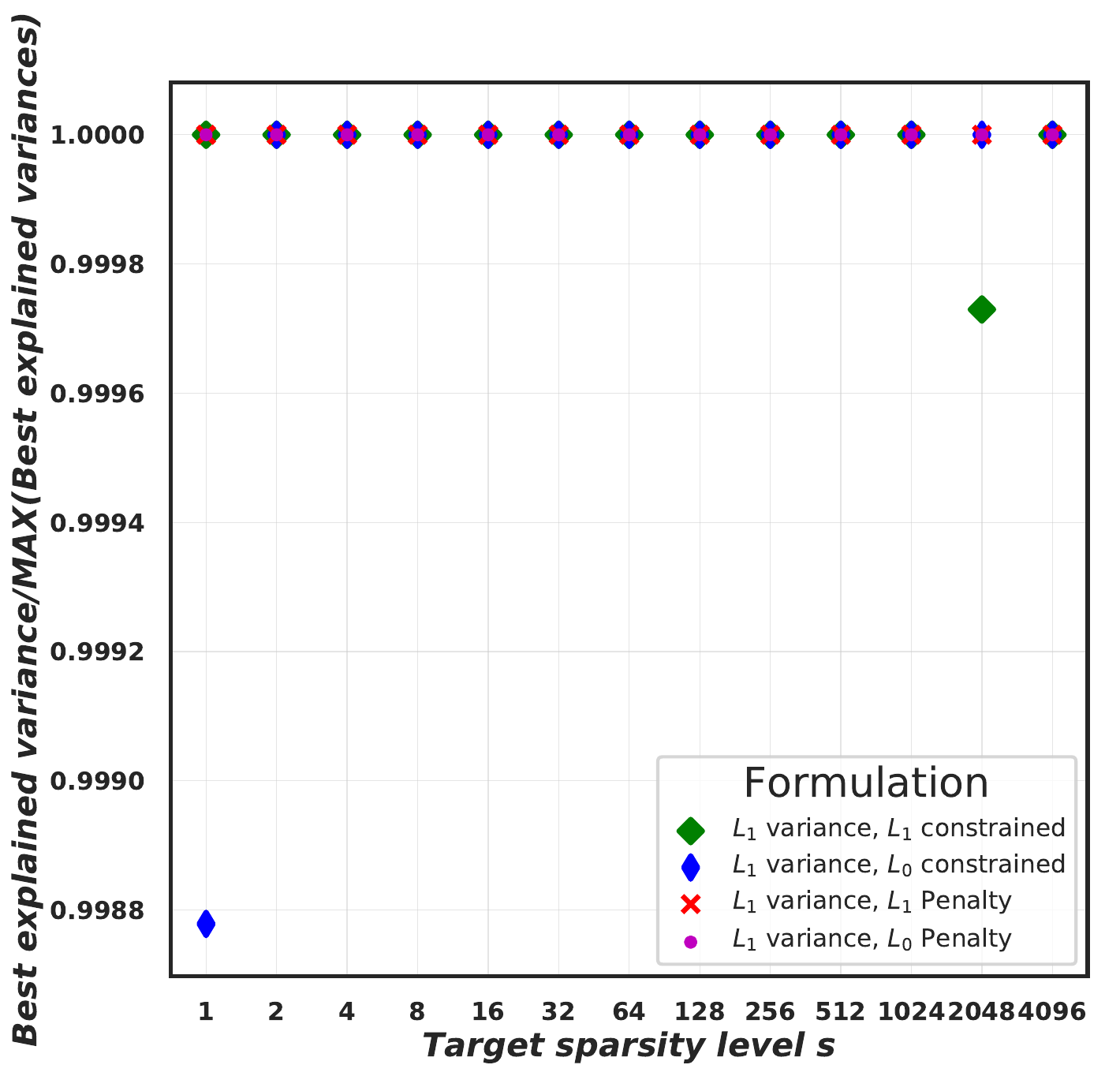}

 \caption{It may be easy to converge to a poor  solution (corresponding $L_1$ variance SPCA problems) for AT\&T Database of Faces.}
 \label{fig:BPmsL_1}
\end{figure}

Clearly, for small $s$ it is easy to obtain a bad solution if we run the method only a few times; this effect is milder for large $s$ but may be substantial nevertheless in real life problems. Hence, especially when $s$ is small,  it is necessary to employ a globalization strategy such as rerunning AM from a number of different starting points. This experiment illustrates that the simple strategy of running the method from a number of randomly generated starting points
can be effective in finding solutions with more explained variance. A ``naive'' (NAI) approach would be to do this sequentially: solve the problem with one starting point first before solving it for another starting point.


\subsection{Economies of scale} Running AM in parallel, started from a number of SPs,  increases the utilization of computer resources, especially on parallel architectures. In order to demonstrate this, we generated 6 data matrices with $p=1000, 2000, \dots, 32000$ and  run the AM method for the $L_0$ penalized $L_2$ variance SPCA formulation with $l=256$ SPs (and $maxIt=10$). By BAT$r$ we denote the approach with batches of size $r$. Hence, SFA = BAT$256$ and NAI = BAT$1$. Besides these two basic choices, we look at BAT$4$, BAT$16$ and BAT$64$ as well. The results can be found in Figure~\ref{fig:MAIN}.

\begin{figure}[!htp]
\centering
\noindent
\includegraphics[width=0.48\linewidth]{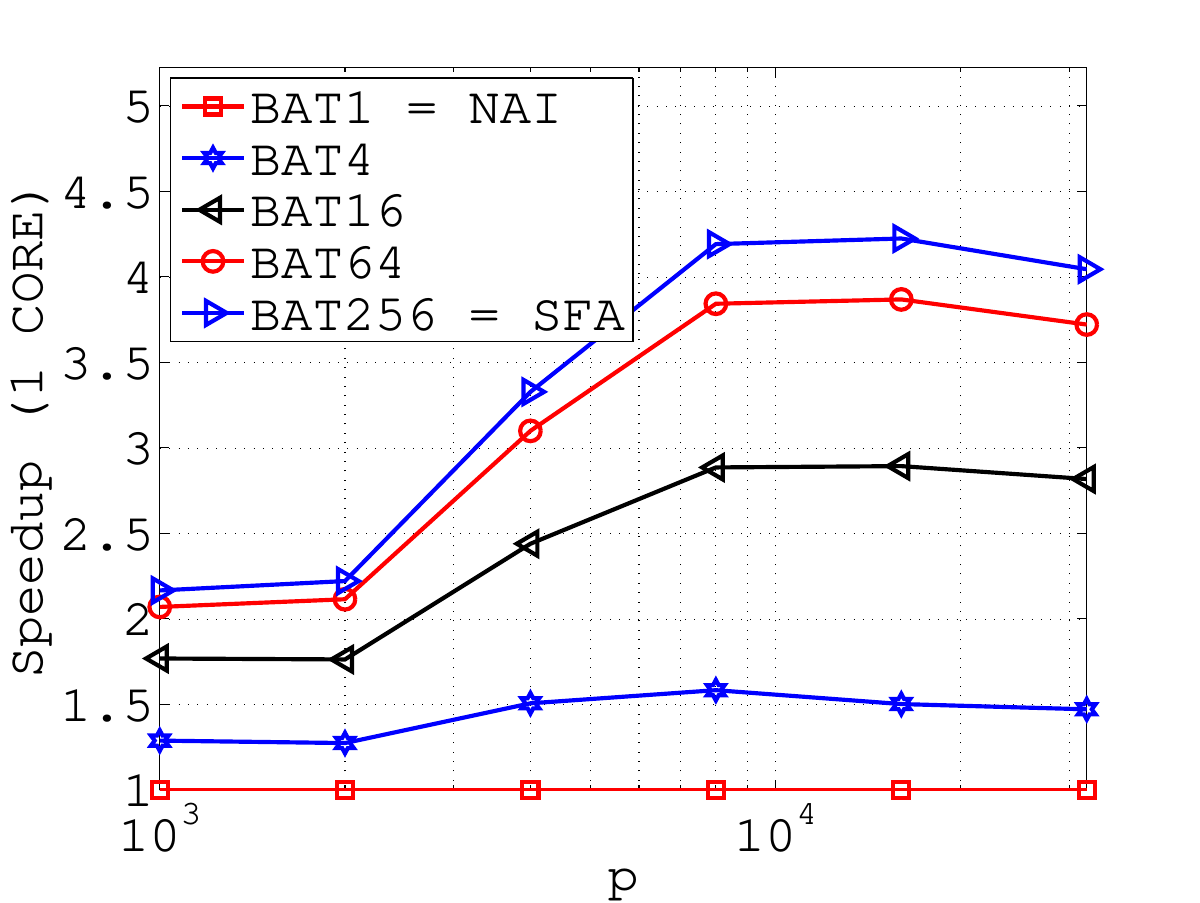}
\includegraphics[width=0.48\linewidth]{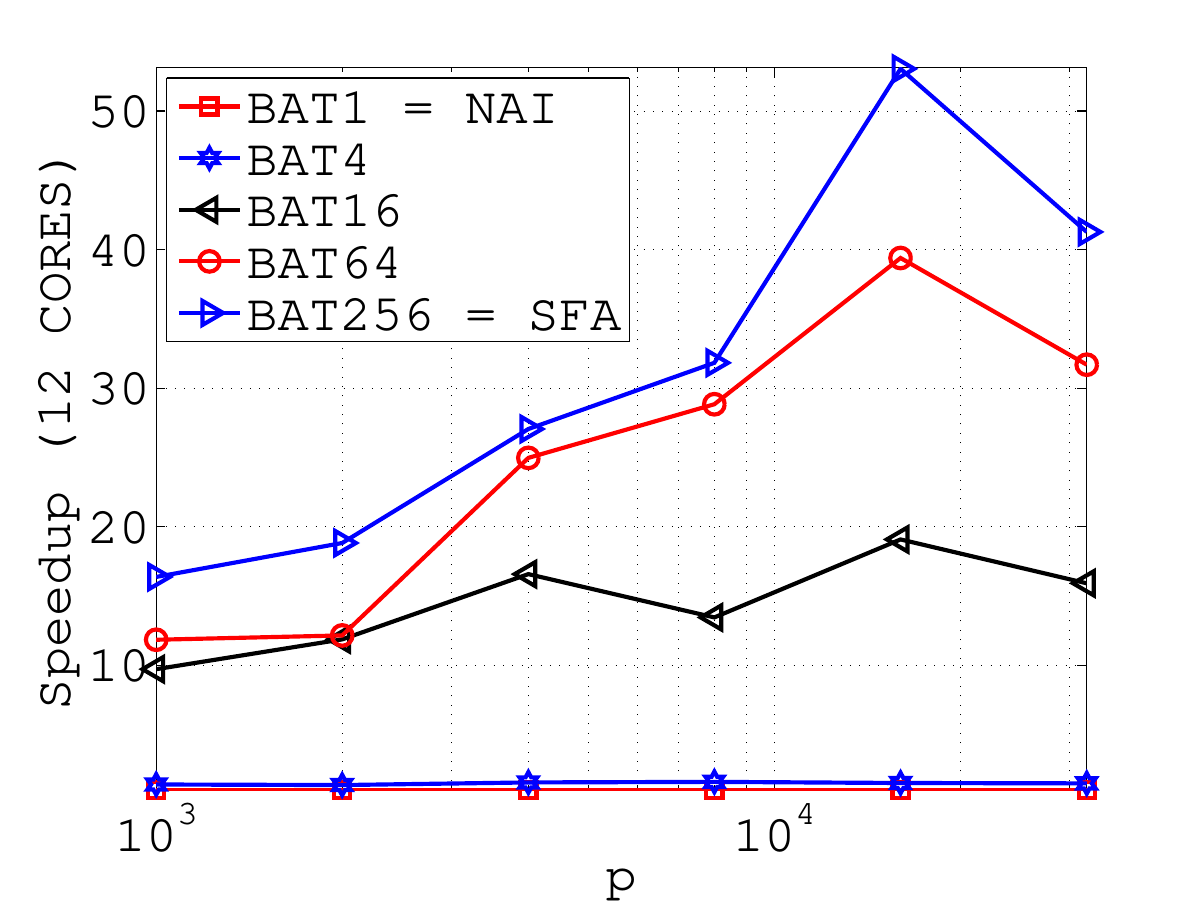}
 \caption{Economies of scale: ``Start-from-all'' (SFA) is better than any of the batching strategies on a single-core machine (LEFT); even more so on a multi-core machine (RIGHT).}
 \label{fig:MAIN}
\end{figure}

Different problem sizes $p$ appear on the horizontal axis; on the vertical axis we plot the speedup obtained by applying a particular batching strategy compared to NAI. Note that even on a single-core computer (LEFT plot) we benefit from running the methods in parallel (``economies of scale'') rather than running them one after another. Indeed, we can obtain a $2-3\times$ speedup with BAT$16$ across the whole range of problem sizes, and $4\times$ speedup with SFA for large enough $p$. With $12$ cores (RIGHT plot) the effect is much more dramatic: the speedup for BAT$16$ is consistently in the $10-20\times$ range, and can even reach $50\times$ for SFA.

\subsection{Dynamic replacement} It often happens, especially when batch size is large, that some problems within a batch converge sooner than others. The vanilla BAT approach described above does nothing about it, and continues through matrix-matrix multiplies, updating the already converged iterates, until the last problem in the batch converges. A minor but not negligible speedup is possible by employing an ``on-the-fly'' (OTF) dynamic replacement technique, where whenever a certain problem converges, it is replaced by a new one. Hence, no predefined batches exist---OTF can be viewed as a greedy list scheduling heuristic. We used $l=1024$ starting points and compare SFA$1024$ with BAT$64$ and OTF$64$--the dynamic replacement variant of BAT$64$.

\begin{figure}[!htp]
\centering
\noindent
 \includegraphics[width=0.48\linewidth]{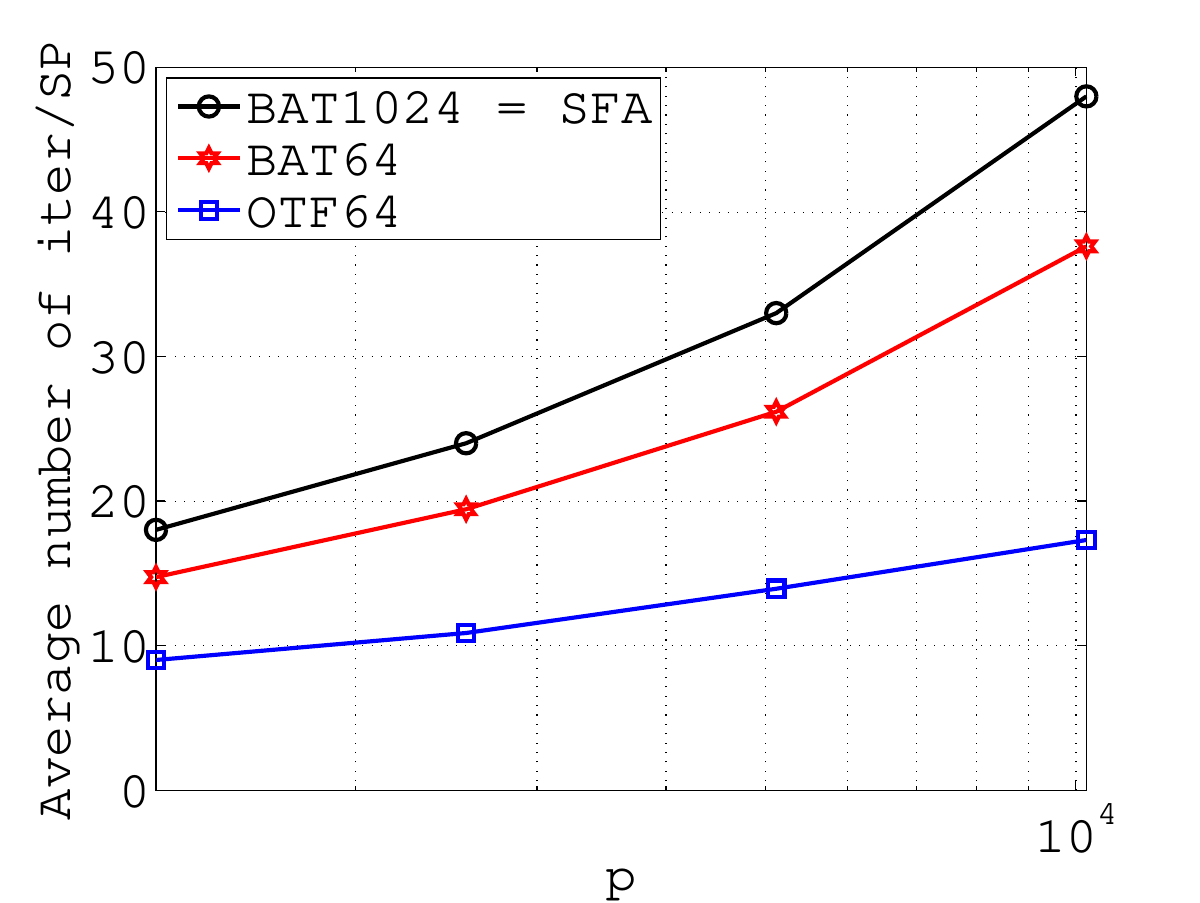}
 \includegraphics[width=0.48\linewidth]{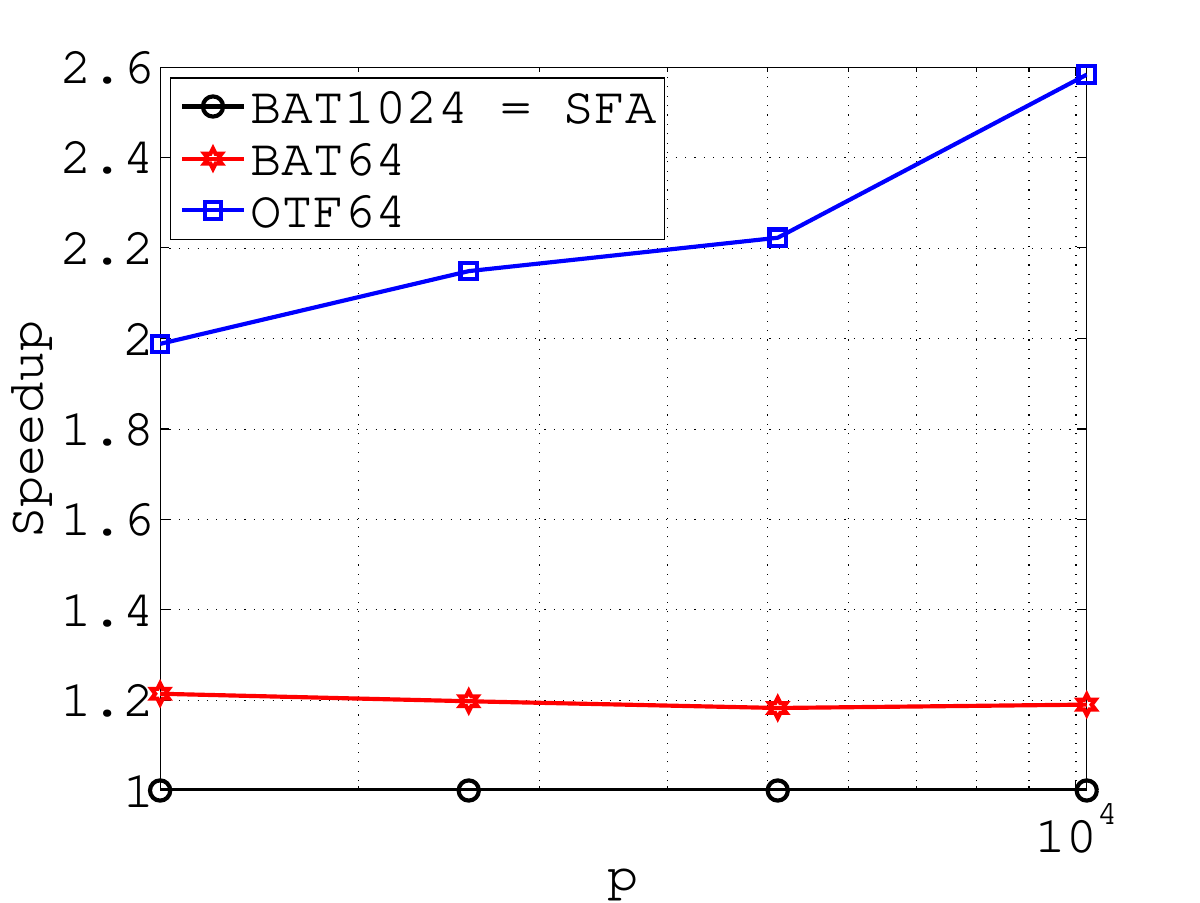}
 \caption{Dynamic Replacement: ``On-the-fly'' (OTF) is better than ``Batches'' (BAT), which is better than ``start-from-all'' (SFA).}
 \label{fig:MAIN2}
\end{figure}

Looking at the LEFT plot in Figure~\ref{fig:MAIN2}, we see that the average number of iterations per starting point is much smaller for OTF. This results in speedup of more than $2\times$ when compared with SFA (RIGHT plot). Notably, SFA is \emph{slower} than both BAT64 and OTF64, which shows that it may not be optimal to choose $r=l$.

\section{Multi-core Processors, GPUs and Clusters}\label{sec:arch}

Accompanying this paper is the open source software package \textbf{``24am''}\footnote{
\remove{\href{https://code.google.com/p/24am/}{https://code.google.com/p/24am/}}
\add{\href{https://github.com/optml/24am}{https://github.com/optml/24am}}.}  implementing parallelization strategies described in Section~\ref{sec:par},  all with Algorithm 2 (AM) used as the underlying solution method, with the option of using any of the 8 optimization formulations of SPCA described in Table~\ref{tab:8main}. The name 24am comes from the fact that we implement the solver for 3 different parallel architectures: multi-core processors, GPUs and computer clusters, leading to $24=8\times 3$ methods based on AM.

\add{In the rest of this section}
 we first perform several numerical experiments illustrating the speedups obtained by parallelization on these three computing architectures. We then conclude with a real-life numerical example (large text corpora) and a few implementation remarks.

\subsection{Multi-core speedup} Here we solve 9 random $L_1$ constrained $L_1$ variance SPCA instances of sizes $p=100\times 2^i$, $i=1,\dots,9$, $n=p/10$, with $100$ SPs each, on a machine using $1, 2, 4$ and $8$ cores; see Figure~\ref{fig:MAIN3}.

\begin{figure}[!htp]
\centering
\noindent
 \includegraphics[width=0.48\linewidth]{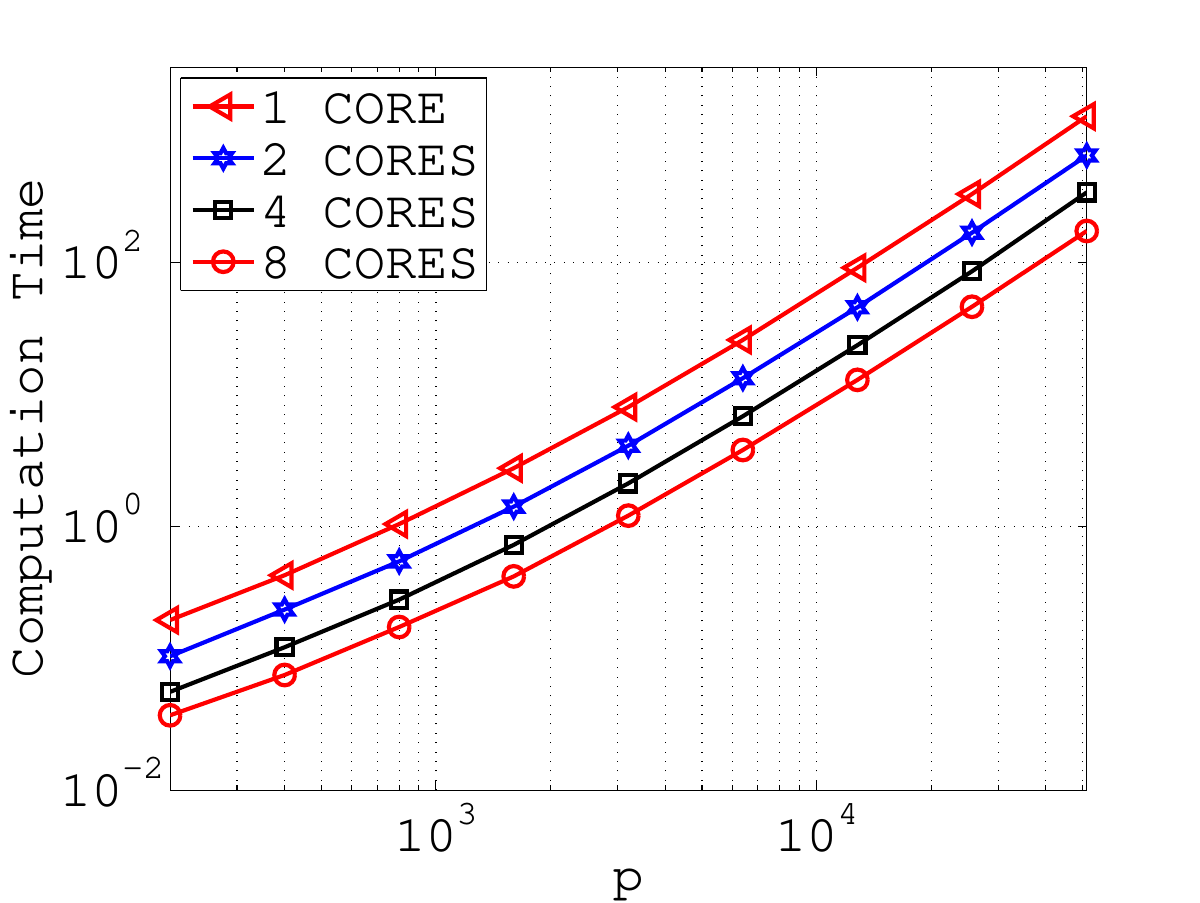}
 \includegraphics[width=0.48\linewidth]{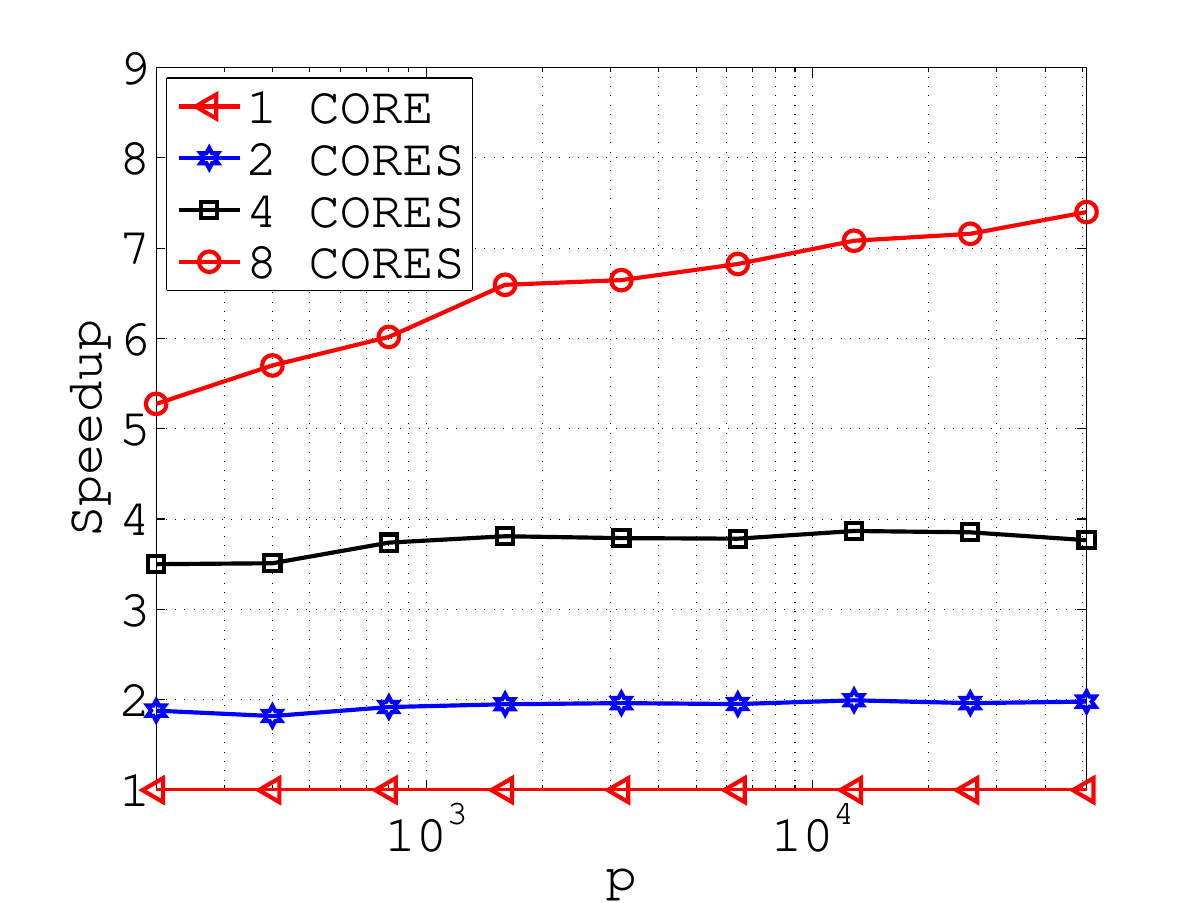}
 \caption{Multi-core speedup is proportional to the number of cores.}
 \label{fig:MAIN3}
\end{figure}

 The plot on the LEFT shows the total computational time; the plot on the RIGHT shows the speedup of multi-core codes compared to the single-core code. Note that the speedup is consistently close to the number of cores for the 2 and 4-core setups across all problem sizes, and is growing with $p$ from $5\times$ to about $7.5 \times$ in the $8$-core setup.

\subsection{GPU speedup} Here we solve  8 random $L_1$ penalized $L_1$ variance SPCA instances with $p$ varying roughly between $10^3$ and $10^5$, and $n=p/200$. We  solved all formulations with $\{1,16,256\}$ SPs on a single-core CPU and a GPU; the results are shown in Figure~\ref{fig:MAIN4}.

\begin{figure}[!htp]
\centering
\noindent
 \includegraphics[width=0.48\linewidth]{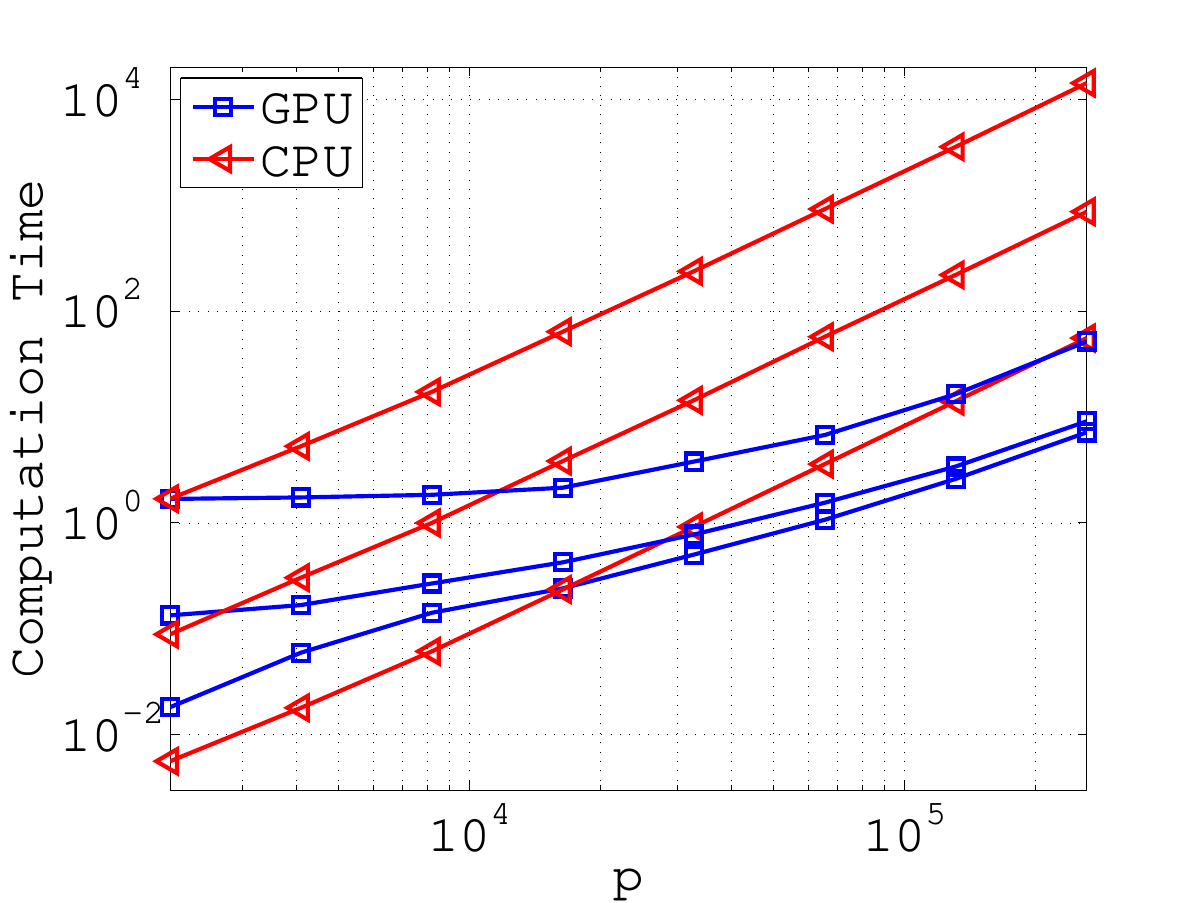}
 \includegraphics[width=0.48\linewidth]{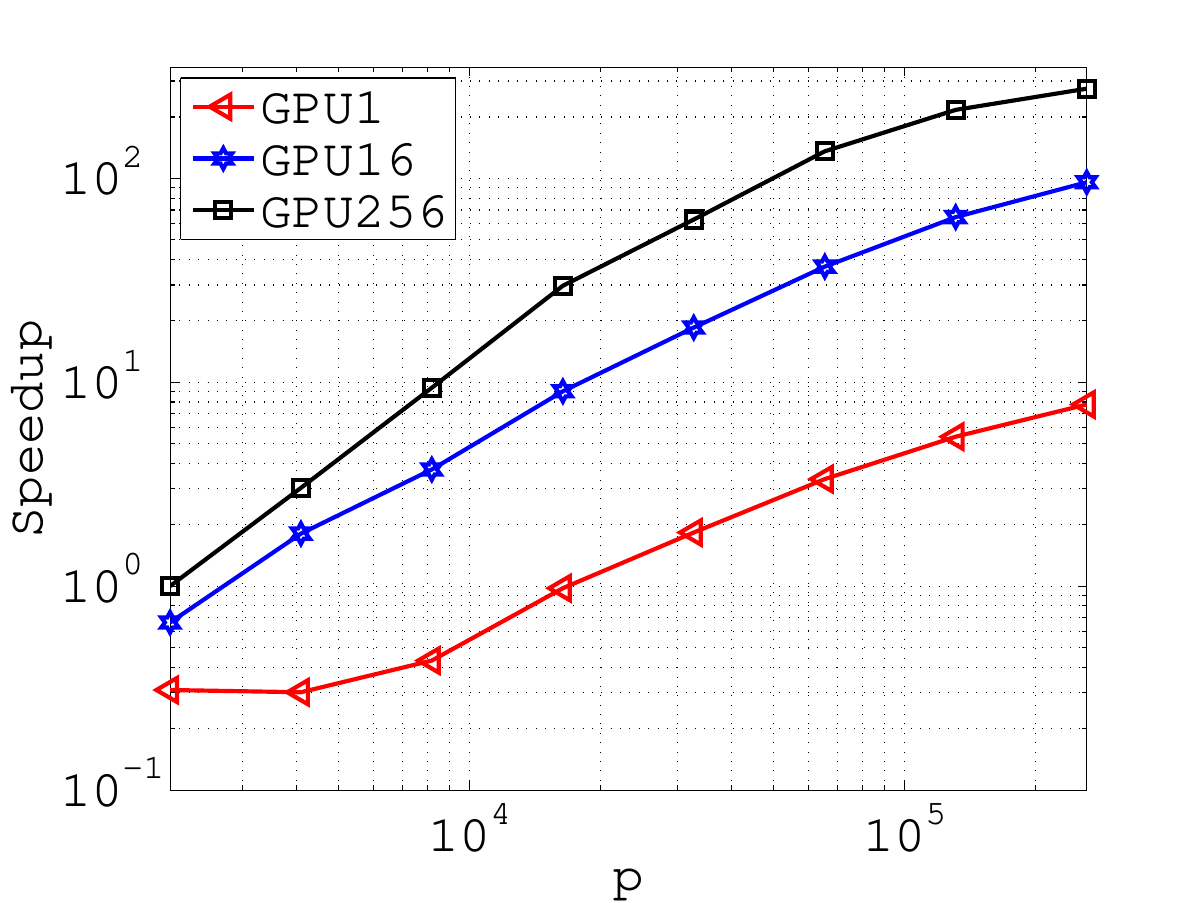}
 \caption{GPU code can achieve $125\times$ speedup compared to single-core when 256 starting points are used.}
 \label{fig:MAIN4}
\end{figure}

The plot on the LEFT shows the total computational time. The red lines with triangle markers correspond to the single-core setup, the ``higher'' the line, the more starting points were used. The blue lines with square markers correspond to our GPU codes. While the runtime increases linearly with problem size for the single-core codes, it grows slowly for the GPU codes. Note that the GPU code may actually be slower for small problem sizes. Looking at the RIGHT plot, we see that the GPU code is capable of a $100$-$125\times$ speedup; this happens for large problem sizes and $256$ SPs. The speedup can reach $100 \times$ for $16$ SPs as well.

\subsection{Cluster code} In this experiment we  solved several $L_1$ penalized $L_2$ variance SPCA problems with a \emph{fully dense} matrix $A\in \RR^{n\times p}$; the results are in Table~\ref{tbl:clusterResult}. We focus our discussion on the largest of the problems only (last three lines of the table), one with
   $n=6\times 10^3$ and $p=8\times 10^6$. We used a cluster of 800 CPUs; storage of the data matrix required 357.6 GB of memory. The matrix was first loaded from files to memory; this process took $t_1=92$ seconds. Subsequently, the loaded data was distributed to CPUs where needed, which took additional $t_2=713$ seconds. Finally we  run the AM method with $1$, $32$ and $64$ starting points and measured the average time of a single iteration; the results are $t_3^1 = 4.1$, $t_3^{1} = 51.1$ and $t_3^{1} = 134.9$ seconds, respectively. When using a single starting point, the method would converge in about a minute. The $t_3^{k}$ column of Table~\ref{tbl:clusterResult} depicts the time it takes for the solver to perform $k$ iterations. We treated the problem directly, without using any safe feature elimination  techniques \citep{ZhElGh11}. Such preprocessing could, however, be able to expand the reach of our cluster code to even larger problem sizes.

\begin{table}[!ht]
\begin{center}
\footnotesize
\tiny 
\begin{tabular}{ l r r c r r r r r r }
\toprule
$n\times p$ &  memory& \# CPUs & GRID & SP&  $t_1$ & $t_2$&$t_3^1$& $t_3^{4}$& $t_3^{16}$ 
\\ \midrule
$10^4\times 2\cdot10^5$ & 14.9 GB & 20 & $10 \times 2$&1 &42.68 & 0.86 &  0.56&2.06&8.48 
\\
$10^4\times 2\cdot10^5$ & 14.9 GB & 20 & $10 \times 2$&32 &-&-&4.60&18.89&87.84 
\\
$10^4\times 2\cdot10^5$ & 14.9 GB & 20 & $10 \times 2$&64&-&-&10.47&37.88&166.60 
\\ \hdashline
$6\cdot 10^3\times 4\cdot10^5$ & 17.8 GB & 40 & $10 \times 4$&1&26.89&86.33&0.78&3.15&9.96
\\
$6\cdot 10^3\times 4\cdot10^5$ & 17.8 GB & 40 & $10 \times 4$&32&-&-&7.39&27.72&125.14
\\
$6\cdot 10^3\times 4\cdot10^5$ & 17.8 GB & 40 & $10 \times 4$&64&-&-&13.19&58.36&201.51
\\ \hdashline
$6\cdot 10^3\times 10^6$ & 44.7 GB & 100 & $10 \times 10$&1&49.22&104.26&0.45&2.44&11.62
\\
$6\cdot 10^3\times 10^6$ & 44.7 GB & 100 & $10 \times 10$&32&-&-&6.37&29.72&115.73
\\
$6\cdot 10^3\times 10^6$ & 44.7 GB & 100 & $10 \times 10$&64&-&-&14.14&52.64&219.8
\\ \hdashline
$6\cdot 10^3\times 4\cdot10^6$ & 178.8 GB & 400 & $10 \times 40$&1&129.69&611.69&1.24&5.12&31.46
\\
$6\cdot 10^3\times 4\cdot10^6$ & 178.8 GB & 400 & $10 \times 40$&32&-&-&17.50&61.36&255.80
\\
$6\cdot 10^3\times 4\cdot10^6$ & 178.8 GB & 400 & $10 \times 40$&64&-&-&31.36&141.61&525.08
\\ \hdashline
$6\cdot 10^3\times 8\cdot10^6$ & 357.6  GB & 800 & $10 \times 80$&1& 92.12&713.45&4.14&15.82&95.51
\\
$6\cdot 10^3\times 8\cdot10^6$ & 357.6  GB & 800 & $10 \times 80$&32&-&-&51.11& 324.26&619.45
\\
$6\cdot 10^3\times 8\cdot10^6$ &  357.6 GB & 800 & $10 \times 80$&64&-&-& 134.89&690.06&-
\\
\bottomrule
\end{tabular}
\end{center}

\caption{Experiments with the cluster implementation. For the first experiment (first three rows)
the dimensions of the virtual grid matched the size of the data matrix, whence $t_2$ is small.
}
\label{tbl:clusterResult}
\end{table}

\begin{table}[!ht]
\begin{center}
\footnotesize
\begin{tabular}{ c c c c c }
\toprule
NYT 1$^{st}$ PC & NYT 2$^{nd}$ PC & NYT 3$^{rd}$ PC & NYT 4$^{th}$ PC & NYT 5$^{th}$ PC \\ \midrule
game & companies & campaign & children & attack \\\hdashline 
play & company & president & program & government \\\hdashline 
player & million & al gore & school & official\\\hdashline 
season & percent & bush & student & US\\\hdashline 
team & stock & george bush & teacher & united states \\
\bottomrule
\end{tabular}

\vskip5pt 

\begin{tabular}{ c c c c c }
\toprule
PubMed 1$^{st}$ PC & PubMed $2^{nd}$ PC & PubMed $3^{rd}$ PC & PubMed 4$^{th}$ PC & PubMed 5$^{th}$ PC \\
\midrule
disease & cell & activity & cancer & age \\\hdashline 
level & effect & concentration & malignant & child \\\hdashline 
patient & expression & control & mice & children \\\hdashline 
therapy & human & rat & primary & parent \\\hdashline 
treatment & protein & receptor & tumor & year\\
\bottomrule
\end{tabular}
\end{center}
\caption{First 5 sparse PCs for NYTimes and PubMed data sets.}\label{tab:LTC}
\end{table}

\subsection{Large text corpora} In the first experiment we tested the AM method with $L_0$ constrained $L_2$ variance formulation (with $s=5$) on two medium-size data sets from the \emph{Machine Learning Repository}\footnote{\href{http://archive.ics.uci.edu/ml/datasets/Bag+of+Words}{http://archive.ics.uci.edu/ml/datasets/Bag+of+Words}}: news articles appeared in New York Times and  abstracts of articles published in PubMed. Each data set is formatted as a matrix $A\in\RR^{n\times p}$, where the rows of $A$ correspond to news articles in the NYTimes data set and to abstracts in PubMed, and the columns correspond to words. The number of appearances of word $j$ in article or abstract $i$ is the $(i,j)$-th entry of $A$; the matrices are hence clearly sparse. The NYTimes data set has 300,000 articles, 102,660 words, and approximately 70 million nonzero entries. The PubMed data set contains 8.2 million articles, 141,043 words, and approximately 484 million nonzeroes. The matrices can be stored in 0.778 GB and 5.42 GB memory space, respectively.  We have customized the AM method to exploit sparsity as much as possible. In Table~\ref{tab:LTC} we present the first 5 sparse principal components (5 words each). Clearly, the first PC for NYT is about sports, the second about business, the third about elections, the fourth about education and the fifth about United States. Similar interpretations can be given to the PubMed PCs. \add{We also tested the AM method with other formulations reported in Table \ref{tab:8REFORM} for the NYTimes data set. Table \ref{tab:LTC_L2} illustrates the first 5 sparse principal components regarding the formulations with $L_2$ variance\footnote{Note that the different colors in tables \ref{tab:LTC_L2} and \ref{tab:LTC_L1} are corresponding to the formulations with the same color in Table \ref{tab:8REFORM}.}. We also provided the nonzero values of sparse principal components corresponding to each word, and sort each principal component based on the values for each word. Furthermore, Table \ref{tab:LTC_L1} presents the first 5 sparse principal components regarding the formulations with $L_1$ variance. For each formulation, we ran AM method by starting from $l = 20$ random starting points with $maxIt = 20$ and $tol = 10^{-6}$. Moreover, Tables \ref{tab:bestVarNYT_L2var} and \ref{tab:bestVarNYT_L1var} show the best variances (among 20 runs) with respect to the first 5 sparse PCs for the NYTimes data set for the formulation with $L_2$ and $L_1$ variances, respectively.}

\begin{table}[h!]
\begin{center}
\footnotesize 
\begin{tabular}{ c c c c c c }\toprule
 
  NYT 1$^{st}$ PC & NYT 2$^{nd}$ PC & NYT 3$^{rd}$ PC & NYT 4$^{th}$ PC & NYT 5$^{th}$ PC \\ \midrule
  \textcolor{green!70!black}{ 2000778.58}&\textcolor{green!70!black}{ 1912905.67}&\textcolor{green!70!black}{ 1560637.32}&\textcolor{green!70!black}{ 1429685.36}&\textcolor{green!70!black}{ 1193802.56}\\ 
 \hdashline
 \textcolor{blue}{ 2000778.59}&\textcolor{blue}{ 1912905.66}&\textcolor{blue}{ 1560637.45}&\textcolor{blue}{ 1429685.37}&\textcolor{blue}{ 1193803.32}\\
 \hdashline
 \textcolor{red}{2000778.60}&\textcolor{red}{1912906.01}&\textcolor{red}{1560637.21}&\textcolor{red}{1429685.37}&\textcolor{red}{1193838.99}\\
 \hdashline
 \textcolor{purple!50!blue}{ 1912905.56}&\textcolor{purple!50!blue}{ 2000778.59}&\textcolor{purple!50!blue}{ 1560636.59}&\textcolor{purple!50!blue}{ 1429685.34}&\textcolor{purple!50!blue}{ 1193792.20}\\
\bottomrule
\end{tabular}
\end{center}
\caption{
\add{The best variance w.r.t. the first 5 sparse PCs for NYTimes data set for $L_2$ variance, with $L_0$ constraint / $L_1$ constraint / $L_0$ penalty / $L_1$ penalty.}}\label{tab:bestVarNYT_L2var}
\end{table}

\begin{table}[h!]
\begin{center}
\footnotesize 
\begin{tabular}{ c c c c c c }\toprule
  NYT 1$^{st}$ PC & NYT 2$^{nd}$ PC & NYT 3$^{rd}$ PC & NYT 4$^{th}$ PC & NYT 5$^{th}$ PC \\ \midrule
\textcolor{green!70!black}{ 486843.78}&\textcolor{green!70!black}{ 462445.23}&\textcolor{green!70!black}{ 386907.51}&\textcolor{green!70!black}{ 320581.40}&\textcolor{green!70!black}{ 315784.42}\\
 \hdashline
 \textcolor{blue}{486843.78}&\textcolor{blue}{462445.23}&\textcolor{blue}{384622.40}&\textcolor{blue}{336912.52}&\textcolor{blue}{347835.82}\\
 \hdashline
 \textcolor{red}{486843.78}&\textcolor{red}{462391.75}&\textcolor{red}{387579.36}&\textcolor{red}{309628.15}&\textcolor{red}{295577.97}\\
 \hdashline
 \textcolor{purple!50!blue}{486843.78}&\textcolor{purple!50!blue}{462445.23}&\textcolor{purple!50!blue}{387901.14}&\textcolor{purple!50!blue}{319704.28}&\textcolor{purple!50!blue}{306050.47}\\
\bottomrule
\end{tabular}
\end{center}
\caption{\add{
The best variance w.r.t. the first 5 sparse PCs for NYTimes data set for $L_1$ variance, with $L_0$ constraint / $L_1$ constraint / $L_0$ penalty / $L_1$ penalty.}}\label{tab:bestVarNYT_L1var}
\end{table}

\begin{table}[h!]
\begin{center}
\footnotesize
\begin{tabular}{ c c c c c }\toprule
NYT 1$^{st}$ PC & NYT 2$^{nd}$ PC & NYT 3$^{rd}$ PC & NYT 4$^{th}$ PC & NYT 5$^{th}$ PC \\ \midrule
 \textcolor{green!70!black}{  team} &\textcolor{green!70!black}{   percent} & \textcolor{green!70!black}{  al gore }& \textcolor{green!70!black}{  school} &  \textcolor{green!70!black}{ official }\\ 
 (0.6118)&(0.6768) & (0.6115)& (0.8143)&(0.7183)\\ \hdashline 
\textcolor{green!70!black}{  game }&\textcolor{green!70!black}{   company} &  \textcolor{green!70!black}{ george bush }&\textcolor{green!70!black}{   student }& \textcolor{green!70!black}{  government }\\ 
 (0.4499)&(0.5117) & (0.4710)& (0.5139)&(0.4570)\\ \hdashline   
\textcolor{green!70!black}{  season }& \textcolor{green!70!black}{  million} & \textcolor{green!70!black}{  bush} & \textcolor{green!70!black}{  program }& \textcolor{green!70!black}{  US}\\ 
 (0.4368)&(0.3497) & (0.4539)& (0.1616)&(0.3208)\\ \hdashline 
\textcolor{green!70!black}{  player} &\textcolor{green!70!black}{   companies} & \textcolor{green!70!black}{  campaign  }& \textcolor{green!70!black}{  teacher }& \textcolor{green!70!black}{  united states}\\ 
 (0.3833)&(0.2868) & (0.3284)& (0.1549)&(0.3064)\\ \hdashline 
\textcolor{green!70!black}{  play} &\textcolor{green!70!black}{   stock }& \textcolor{green!70!black}{ president } & \textcolor{green!70!black}{  children} & \textcolor{green!70!black}{ attack }\\ 
 (0.2921)&(0.2746) & (0.3002)& (0.1499)&(0.2796)\\ 
\hline
 \textcolor{blue}{  team} &\textcolor{blue}{   percent} & \textcolor{blue}{  al gore }& \textcolor{blue}{  school} &  \textcolor{blue}{ official }\\ 
 (0.6119)&(0.6768) & (0.6123)& (0.8144)&(0.7185)\\ \hdashline  
\textcolor{blue}{  game }&\textcolor{blue}{   company} &  \textcolor{blue}{ george bush }&\textcolor{blue}{   student }& \textcolor{blue}{  government }\\ 
 (0.4498)&(0.5117) & (0.4728)& (0.5138)&(0.4567)\\ \hdashline  
\textcolor{blue}{  season }& \textcolor{blue}{  million} & \textcolor{blue}{  bush} & \textcolor{blue}{  program }& \textcolor{blue}{  US}\\ 
 (0.4369)&(0.3497) & (0.4509)& (0.1617)&(0.3208)\\ \hdashline  
\textcolor{blue}{  player} &\textcolor{blue}{   companies} & \textcolor{blue}{  campaign  }& \textcolor{blue}{  teacher }& \textcolor{blue}{  united states}\\ 
 (0.3833)&(0.2868) & (0.3285)& (0.1549)&(0.3064)\\ \hdashline 
\textcolor{blue}{  play} &\textcolor{blue}{   stock }& \textcolor{blue}{ president } & \textcolor{blue}{  children} & \textcolor{blue}{ attack }\\ 
 (0.2920)&(0.2746) & (0.3001)& (0.1499)&(0.2796)\\ 
\hline
 \textcolor{red}{  team} &\textcolor{red}{   percent} & \textcolor{red}{  al gore }& \textcolor{red}{  school} &  \textcolor{red}{ official }\\ 
 (0.6119)&(0.6771) & (0.6115)& (0.8144)&(0.7184)\\ \hdashline 
\textcolor{red}{  game }&\textcolor{red}{   company} &  \textcolor{red}{ george bush }&\textcolor{red}{   student }& \textcolor{red}{  government }\\ 
 (0.4498)&(0.5114) & (0.4710)& (0.5138)&(0.4567)\\ \hdashline   
\textcolor{red}{  season }& \textcolor{red}{  million} & \textcolor{red}{  bush} & \textcolor{red}{  program }& \textcolor{red}{  US}\\ 
 (0.4368)&(0.3495) & (0.4540)& (0.1616)&(0.3209)\\ \hdashline   
\textcolor{red}{  player} &\textcolor{red}{   companies} & \textcolor{red}{  campaign  }& \textcolor{red}{  teacher }& \textcolor{red}{  united states}\\ 
 (0.3833)&(0.2867) & (0.3284)& (0.1549)&(0.3065)\\ \hdashline   
\textcolor{red}{  play} &\textcolor{red}{   stock }& \textcolor{red}{ president } & \textcolor{red}{  children} & \textcolor{red}{ attack }\\ 
 (0.2920)&(0.2746) & (0.3003)& (0.1500)&(0.2796)\\ 
\hline
 \textcolor{purple!50!blue}{percent  } &\textcolor{purple!50!blue}{ team } & \textcolor{purple!50!blue}{  al gore }& \textcolor{purple!50!blue}{  school} &  \textcolor{purple!50!blue}{ official }\\ 
 (0.6767)&(0.6119) & (0.6114)& (0.8144)&(0.7183)\\ \hdashline 
\textcolor{purple!50!blue}{company}&\textcolor{purple!50!blue}{game} &  \textcolor{purple!50!blue}{ george bush }&\textcolor{purple!50!blue}{   student }& \textcolor{purple!50!blue}{  government }\\ 
 (0.5118)&(0.4498) & (0.4708)& (0.5139)&(0.4571)\\ \hdashline  
\textcolor{purple!50!blue}{million}& \textcolor{purple!50!blue}{ season } & \textcolor{purple!50!blue}{  bush} & \textcolor{purple!50!blue}{  program }& \textcolor{purple!50!blue}{  US}\\ 
 (0.3497)&(0.4368) & (0.4543)& (0.1615)&(0.3208)\\ \hdashline  
\textcolor{purple!50!blue}{companies} &\textcolor{purple!50!blue}{ player } & \textcolor{purple!50!blue}{  campaign  }& \textcolor{purple!50!blue}{  teacher }& \textcolor{purple!50!blue}{  united states}\\ 
 (0.2868)&(0.3833) & (0.3284)& (0.1549)&(0.3064)\\ \hdashline   
\textcolor{purple!50!blue}{ stock } &\textcolor{purple!50!blue}{play}& \textcolor{purple!50!blue}{ president } & \textcolor{purple!50!blue}{  children} & \textcolor{purple!50!blue}{ attack }\\ 
 (0.2746)&(0.2920) & (0.3003)& (0.1500)&(0.2796)\\
\bottomrule
\end{tabular}
\end{center}
\caption{\add{First 5 sparse PCs for NYTimes data set for $L_2$ variance, with $L_0$ constraint / $L_1$ constraint / $L_0$ penalty / $L_1$ penalty
(the values inside the parenthesis are corresponding to each word in the specified PCs).}}\label{tab:LTC_L2}
\end{table}

\begin{table}[h!]
\begin{center}
\footnotesize 
\begin{tabular}{ c c c c c }\toprule
NYT 1$^{st}$ PC & NYT 2$^{nd}$ PC & NYT 3$^{rd}$ PC & NYT 4$^{th}$ PC & NYT 5$^{th}$ PC \\ \midrule
 \textcolor{green!70!black}{  percent} &\textcolor{green!70!black}{   team} & \textcolor{green!70!black}{ official}& \textcolor{green!70!black}{  school} &  \textcolor{green!70!black}{united states}\\ 
 (0.6047)&(0.5557) & (0.5846)& (0.6433)&(0.4945)\\ \hdashline  
\textcolor{green!70!black}{  company }&\textcolor{green!70!black}{   game} &  \textcolor{green!70!black}{ government }&\textcolor{green!70!black}{   book }& \textcolor{green!70!black}{  country }\\ 
 (0.4915)& (0.4780)&(0.4789) &(0.4421) &(0.4631)\\ \hdashline  
\textcolor{green!70!black}{  million }& \textcolor{green!70!black}{  season} & \textcolor{green!70!black}{  bush} & \textcolor{green!70!black}{  al gore }& \textcolor{green!70!black}{  attack}\\  
 (0.4900)& (0.4499)&(0.4446)& (0.3809)&(0.4353)\\ \hdashline  
\textcolor{green!70!black}{  companies} &\textcolor{green!70!black}{   player} & \textcolor{green!70!black}{ president }& \textcolor{green!70!black}{  student }& \textcolor{green!70!black}{  US}\\  
(0.2926) & (0.3615)& (0.3975)&(0.36535) &(0.4308)\\ \hdashline  
\textcolor{green!70!black}{  market} &\textcolor{green!70!black}{   play }& \textcolor{green!70!black}{ george bush } & \textcolor{green!70!black}{  children} & \textcolor{green!70!black}{ leader }\\
 (0.2585)& (0.3598)&(0.2701)& (0.3348)&(0.4072)\\ 
\hline
  \textcolor{blue}{  percent} &\textcolor{blue}{   team} & \textcolor{blue}{ official}& \textcolor{blue}{  campaign} &  \textcolor{blue}{ school }\\ 
 (0.6047)&(0.5557) & (0.5936)& (0.5413)&(0.6643)\\ \hdashline  
\textcolor{blue}{  company }&\textcolor{blue}{   game} &  \textcolor{blue}{ government }&\textcolor{blue}{  george bush}& \textcolor{blue}{  women }\\ 
 (0.4915)& (0.4780)&(0.4955) &(0.4812) &(0.4544)\\ \hdashline  
\textcolor{blue}{  million }& \textcolor{blue}{  season} & \textcolor{blue}{  bush} & \textcolor{blue}{  al gore }& \textcolor{blue}{ student}\\  
 (0.4900)& (0.4499)&(0.4511)& (0.4702)&(0.3919)\\ \hdashline  
\textcolor{blue}{  companies} &\textcolor{blue}{   player} & \textcolor{blue}{ president }& \textcolor{blue}{election}& \textcolor{blue}{children}\\  
(0.2926) & (0.3615)& (0.3699)&(0.3905) &(0.3578)\\ \hdashline  
\textcolor{blue}{  market} &\textcolor{blue}{   play }& \textcolor{blue}{political} & \textcolor{blue}{palestinian} & \textcolor{blue}{ tax }\\
 (0.2585)& (0.3598)&(0.2481)& (0.3189)&(0.2654)\\ 
\hline
  \textcolor{red}{  percent} &\textcolor{red}{   team} & \textcolor{red}{ official}& \textcolor{red}{  school} &  \textcolor{red}{ billion }\\ 
 (0.6047)&(0.5741) & (0.5487)& (0.5814)&(0.5698)\\ \hdashline  
\textcolor{red}{  company }&\textcolor{red}{   game} &  \textcolor{red}{ government }&\textcolor{red}{group }& \textcolor{red}{  business }\\ 
 (0.4915)& (0.4711)&(0.4936) &(0.5362) &(0.5134)\\ \hdashline  
\textcolor{red}{  million }& \textcolor{red}{  season} & \textcolor{red}{  bush} & \textcolor{red}{  program }& \textcolor{red}{ fund}\\  
 (0.4900)& (0.4432)&(0.4408)& (0.3838)&(0.4105)\\ \hdashline  
\textcolor{red}{  companies} &\textcolor{red}{   player} & \textcolor{red}{ president }& \textcolor{red}{  george bush }& \textcolor{red}{ money}\\  
(0.2926) & (0.3562)& (0.4100)&(0.3473) &(0.4093)\\ \hdashline  
\textcolor{red}{  market} &\textcolor{red}{   play }& \textcolor{red}{group } & \textcolor{red}{student} & \textcolor{red}{stock}\\
 (0.2585)& (0.3534)&(0.2775)& (0.3261)&(0.2747)\\ 
\hline
  \textcolor{purple!50!blue}{  percent} &\textcolor{purple!50!blue}{   team} & \textcolor{purple!50!blue}{ official}& \textcolor{purple!50!blue}{  school} &  \textcolor{purple!50!blue}{ group }\\ 
 (0.6047)&(0.5557) & (0.5856)& (0.6527)&(0.5768)\\ \hdashline  
\textcolor{purple!50!blue}{  company }&\textcolor{purple!50!blue}{   game} &  \textcolor{purple!50!blue}{ government }&\textcolor{purple!50!blue}{   program }& \textcolor{purple!50!blue}{  united states }\\ 
 (0.4915)& (0.4780)&(0.4788) &(0.4523) &(0.4750)\\ \hdashline  
\textcolor{purple!50!blue}{  million }& \textcolor{purple!50!blue}{  season} & \textcolor{purple!50!blue}{  bush} & \textcolor{purple!50!blue}{  student }& \textcolor{purple!50!blue}{ US}\\  
 (0.4900)& (0.4499)&(0.4463)& (0.3628)&(0.3903)\\ \hdashline  
\textcolor{purple!50!blue}{  companies} &\textcolor{purple!50!blue}{   player} & \textcolor{purple!50!blue}{ president }& \textcolor{purple!50!blue}{  family }& \textcolor{purple!50!blue}{  american}\\  
(0.2926) & (0.3615)& (0.3951)&(0.3458) &(0.3861)\\ \hdashline  
\textcolor{purple!50!blue}{  market} &\textcolor{purple!50!blue}{   play }& \textcolor{purple!50!blue}{ al gore  } & \textcolor{purple!50!blue}{  children} & \textcolor{purple!50!blue}{ attack }\\
 (0.2585)& (0.3598)&(0.2690)& (0.3435)&(0.3742)\\ 
\bottomrule
\end{tabular}
\end{center}
\caption{
\add{First 5 sparse PCs for NYTimes data set for $L_1$ variance, with $L_0$ constraint / $L_1$ constraint / $L_0$ penalty / $L_1$ penalty
(the values inside the parenthesis are corresponding to each word in the specified PCs).}}\label{tab:LTC_L1}
\end{table}

\subsection{Implementation details} For single and multi-core architectures we developed our codes using the CBLAS interface. In particular, we use both the GSL BLAS and the Intel MKL\footnote{\href{http://software.intel.com/en-us/articles/intel-mkl/}{http://software.intel.com/en-us/articles/intel-mkl/}} implementations (single-core) and the GotoBLAS2\footnote{\href{https://www.tacc.utexas.edu/research-development/tacc-software/gotoblas2}{https://www.tacc.utexas.edu/research-development/tacc-software/gotoblas2}} and Intel MKL implementations (multi-core). Parallelization in the multi-core case is performed by the OpenMP interface. When comparing the performance of single-core and multi-core architectures, we use Intel MKL library for both serial and parallel versions of the same algorithm for consistency. Nevertheless, in our experience, GotoBLAS2 implementation of these algorithms are faster than the Intel MKL implementation. We use CuBLAS\footnote{\href{http://developer.nvidia.com/cublas}{http://developer.nvidia.com/cublas}}, version 4.0, on GPU (and make use of Thrust whenever possible for operations such as sorting, memory arrangements and data allocation on GPU). For comparisons between single-core and GPU architectures, we use the GSL BLAS implementation on the single-core. On a cluster, linear algebra is done with Intel MKL's PBLAS, while communication between nodes is via MPI.

\section{Conclusion}

We propose a unifying framework for solving 8 SPCA formulations in which all  have the same form and are solved by the same algorithm: the alternating maximization (AM) method. We observed that AM is in all cases equivalent to the GPower method applied to a suitable convex function. Five of these formulations were previously studied in the literature and three were not; notably the $L_1$ constrained $L_1$ (robust) variance seems to be new. For each of these formulations we have written 4 efficient codes---one serial and three parallel---aimed at single-core, multi-core and GPU workstations and a cluster. All these codes are enabled with efficient parallel implementations of a multiple-starting-point globalization strategy which aims to find PCs explaining more variance; with speedup per starting point achieving up to two orders of magnitude. The most efficient of these implementations is ``on-the-fly''. We  demonstrated that our cluster code is able to solve a very large problem with a 357 GB fully dense data matrix.


%
%

\bibliographystyle{spbasic}      
\bibliographystyle{spmpsci}      
\bibliography{24am-biblio}

\end{document}